\DeclareMathAlphabet{\mathpzc}{OT1}{pzc}{m}{it}
\patchcmd{\math@cr@@@align}{\cr}{\global\let\df@label\@empty\cr}{}{}
\newcommand{\qualification}[1]{\ifthmt@thisistheone#1\fi}
\title{Tree-Averaging Algorithms for\\Ensemble-Based Unsupervised Discontinuous Constituency Parsing}
\author{Behzad Shayegh \quad\quad Yuqiao Wen \quad\quad Lili Mou$^{*}$ \\ Dept.~Computing Science \& Alberta Machine Intelligence Institute, University of Alberta\\
$^*$Canada CIFAR AI Chair\\
\texttt{\href{mailto:the.shayegh@gmail.com}{the.shayegh@gmail.com}}\quad \texttt{\href{mailto:yq.when@gmail.com}{yq.when@gmail.com}}\quad \texttt{\href{mailto:doublepower.mou@gmail.com}{doublepower.mou@gmail.com}}
}
\begin{document}

\setlength{\abovedisplayskip}{7pt}
\setlength{\belowdisplayskip}{7pt}
\setlength{\abovedisplayshortskip}{7pt}
\setlength{\belowdisplayshortskip}{7pt}

	\setlength{\textfloatsep}{10pt plus 1.0pt minus 2.0pt}
\setlength{\floatsep}{10pt plus 1.0pt minus 2.0pt}
\setlength{\intextsep}{10pt plus 1.0pt minus 2.0pt}

\maketitle
\begin{abstract}
We address unsupervised discontinuous constituency parsing, where we observe a high variance in the performance of the only previous model in the literature. We propose to build an ensemble of different runs of the existing discontinuous parser by averaging the predicted trees, to stabilize and boost performance. 
To begin with, we provide comprehensive computational complexity analysis (in terms of \textbf{P} and \textbf{NP}-complete) for tree averaging under different setups of binarity and continuity. We then develop an efficient exact algorithm to tackle the task, which runs in a reasonable time for all samples in our experiments. Results on three datasets show our method outperforms all baselines in all metrics; we also provide in-depth analyses of our approach.\footnote{Code available at \href{https://github.com/MANGA-UOFA/TAA4EUDCP}{https://github.com/MANGA-UOFA/ TAA4EUDCP}}
\end{abstract}

\section{Introduction}\label{sec:intro}

Unsupervised parsing has been attracting the interest of researchers over decades~\citep{klein-manning-2002-generative,klein2005unsupervised,snyder-etal-2009-unsupervised,shen2018ordered,cao-etal-2020-unsupervised}. Compared with supervised methods, unsupervised parsing has its own importance: (1) It reduces the reliance on linguistically annotated data, and is beneficial for low-resource languages and domains~\citep{kann-etal-2019-neural}. (2) Discovering language structures in an unsupervised way helps to verify theories in linguistics~\citep{10.1162/089120101750300490} and cognitive science~\citep{Exemplar2Grammar}. (3) Unsupervised parsing methods are applicable to other types of streaming data, such as motion-sensor signals~\citep{6137337}.

A constituency tree, a hierarchy of words and phrases shown in Figure~\ref{fig:disco_example}, is an important parse structure~\citep{constituencyTree}. Researchers have proposed various approaches to address unsupervised constituency parsing, such as latent-variable methods~\citep{clark-2001-unsupervised,petrov-klein-2007-improved,kim-etal-2019-compound} and rule-based systems~\citep{cao-etal-2020-unsupervised,li-lu-2023-contextual}. In our previous work~\citep{shayegh2023ensemble}, we reveal that different unsupervised parsers have low correlations with each other, and further propose an ensemble approach based on dynamic programming to boost performance. 

\begin{figure}[t]
\resizebox{\linewidth}{!}{
\renewcommand{\tabcolsep}{2pt}
    \centering\sffamily
    \begin{tabular}{c@{\hspace{.1cm}}c@{}c@{}c@{}c@{}c@{}c@{}c r@{\hspace{1.9cm}}|l@{\hspace{1.8cm}} c@{}c@{}c@{}c@{}c@{}c@{}c@{}c@{\hspace{.4cm}}}
    &
    {\Large (a)}&&&\tikzmarknode{t2w123}{\ding{71}}&&&
    &&&
    &&&\tikzmarknode{t1w123}{{\ding{71}}}&&&
    {\Large (b)}
    &\\&
    &\tikzmarknode{t2w12}{\hphantom{\ding{71}}}&{\hphantom{\ding{71}}}&{\hphantom{\ding{71}}}&
    \multicolumn{4}{l|}{\tikzmarknode{t2w23}{\ding{71}} $\rightarrow$ non-binary}
    &
    \multicolumn{3}{r}{fan-out 2 $\leftarrow$ \tikzmarknode{t1w12}{\ding{71}}}&{\hphantom{\ding{71}}}&{\hphantom{\ding{71}}}&\tikzmarknode{t1w23}{{\ding{71}}}&\tikzmarknode{t1w34}{\hphantom{\ding{71}}}&
    &\\&
    \tikzmarknode{t2w1}{\hphantom{\ding{71}}}&&\tikzmarknode{t2w2}{\hphantom{\ding{71}}}&&\tikzmarknode{t2w3}{\hphantom{\ding{71}}}&\hphantom{\ding{71}}&\tikzmarknode{t2w4}{\hphantom{\ding{71}}}
    &&&
    \tikzmarknode{t1w1}{\hphantom{\ding{71}}}&\hphantom{\ding{71}}&\tikzmarknode{t1w2}{\hphantom{\ding{71}}}&&\tikzmarknode{t1w3}{\hphantom{\ding{71}}}&&\tikzmarknode{t1w4}{\hphantom{\ding{71}}}
    &\\&
    \tikzmarknode{ww1}{Buy}&&\tikzmarknode{ww2}{the}&&\tikzmarknode{ww3}{pretty}&&\tikzmarknode{ww4}{book}
    &&&
    \tikzmarknode{w1}{Wake}&&\tikzmarknode{w2}{your}&&\tikzmarknode{w3}{friend}&&\tikzmarknode{w4}{up}
    &
\end{tabular}
    \begin{tikzpicture}[remember picture, overlay]
        \draw[-, black, thick] (t1w123) -- (t1w12);
        \draw[-, black, thick] (t1w123) -- (t1w23);
        \draw[-, black, thick] (t1w23) -- (t1w3);
        \draw[-, black, thick] (t1w23) -- (t1w2);
        \draw[-, black, thick] (t1w12) -- (t1w1);
        \draw[-, black, thick] (t1w12) -- (t1w4);
        
        \draw[-, black, thick] (t2w123) -- (t2w23);
        \draw[-, black, thick] (t2w123) -- (t2w1);
        \draw[-, black, thick] (t2w23) -- (t2w4);
        \draw[-, black, thick] (t2w23) -- (t2w3);
        \draw[-, black, thick] (t2w23) -- (t2w2);
    \end{tikzpicture}
}
\resizebox{\linewidth}{!}{\sffamily
\begin{tabular}{c c c c c c c}
    \hline\\
    {\Large (c)}&&&&\multicolumn{2}{c}{\tikzmarknode{t3w1234567}{{\ding{71}}} $\rightarrow$ non-binary fan-out 1}
    \\
    \multicolumn{4}{r}{binary fan-out 3 $\leftarrow$ \tikzmarknode{t3w13467}{{\ding{71}}}}
    \\
    \\
    \multicolumn{3}{r}{fan-out 3 $\leftarrow$ \tikzmarknode{t3w1346}{{\ding{71}}}}
    \\
    \\
    &&&\multicolumn{2}{l}{\tikzmarknode{t3w34}{{\ding{71}}} $\rightarrow$ fan-out 1}
    \\
    \tikzmarknode{t3w1}&\tikzmarknode{t3w2}{\hphantom{\ding{71}}}&\tikzmarknode{t3w3}{\hphantom{\ding{71}}}&\tikzmarknode{t3w4}{\hphantom{\ding{71}}}&\tikzmarknode{t3w5}{\hphantom{\ding{71}}}&\tikzmarknode{t3w6}{\hphantom{\ding{71}}}&\tikzmarknode{t3w7}{\hphantom{\ding{71}}}
    \\
    \tikzmarknode{www1}{Damit}&\tikzmarknode{www2}{sollen}&\tikzmarknode{www3}{den}&\tikzmarknode{www4}{Kassen}&\tikzmarknode{www5}{Beitragserhöhungen}&\tikzmarknode{www6}{erschwert}&\tikzmarknode{www7}{werden}
\end{tabular}
    \begin{tikzpicture}[remember picture, overlay]
        \draw[-, black, thick] (t3w1234567) -- (t3w13467);
        \draw[-, black, thick] (t3w1234567) -- (t3w2);
        \draw[-, black, thick] (t3w1234567) -- (t3w5);
        \draw[-, black, thick] (t3w13467) -- (t3w7);
        \draw[-, black, thick] (t3w13467) -- (t3w1346);
        \draw[-, black, thick] (t3w1346) -- (t3w6);
        \draw[-, black, thick] (t3w1346) -- (t3w1);
        \draw[-, black, thick] (t3w1346) -- (t3w34);
        \draw[-, black, thick] (t3w34) -- (t3w3);
        \draw[-, black, thick] (t3w34) -- (t3w4);
    \end{tikzpicture}
}
    \caption{(a) A continuous parse structure in English. (b) An arguably discontinuous parse structure in English. (c) A discontinuous parse structure in German. Interesting structures (binarity and fan-out) are illustrated.}
    \label{fig:disco_example}
\end{figure}
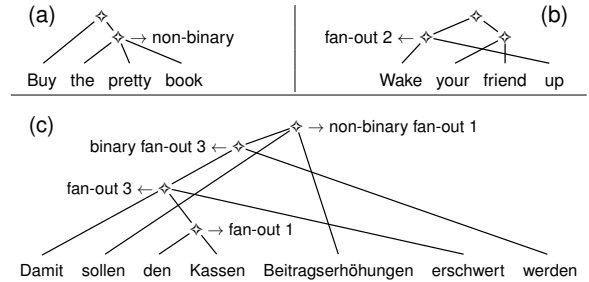

One limitation of most existing studies is that they only address continuous constituency parsing, that is, a constituent can only be a continuous span of words (Figure~\ref{fig:disco_example}a). However, a constituent may be discontinuous~\citep{Parentheticals1982}.\footnote{In the literature, a continuous parse structure is also known to be \textit{projective}, and a discontinuous parse structure is \textit{non-projective}~\citep{versley-2014-experiments}.} In Figure~\ref{fig:disco_example}b, for example, it is linguistically arguable that \textit{wake} and \textit{up} form a constituent in the sentence ``Wake your friend up.'' Such discontinuous constituents are more common in certain languages (such as German shown in Figure~\ref{fig:disco_example}c) than in English~\citep{skut-etal-1997-annotation}, but they have been less tackled in the literature. Very recently, \citet{yang-etal-2023-unsupervised} proposed an unsupervised parsing method based on a mildly context-sensitive grammar~\citep{joshi_1985} that allows discontinuous parse structures, known as discontinuous trees~\citep{10.5555/532941}. As an early attempt, their performance appears to be low, and we find that their approach is noisy and exhibits low correlation in different runs with random seeds. 

In this work, we propose an ensemble method to discontinuous constituency parsing by tree averaging, following our previous work~\citep{shayegh2023ensemble}, in which we address the ensemble of continuous constituency parsing by CYK-like dynamic programming~\citep{YOUNGER1967189}. However, the previous approach is not directly applicable to this paper because of the discontinuous structures in our setup. Consequently, the seemingly small change in the setting (continuous vs.~discontinuous) leads to a completely different landscape of problems, requiring advanced algorithmic analysis and development. 

Specifically, we first analyze the computational complexity of tree averaging under different setups, such as binarity and continuity. We show certain problems are in the \textbf{P} category, whereas general tree averaging is \textbf{NP}-complete. Then, we develop an algorithm utilizing the meet-in-the-middle technique~\citep{horowitz1974computing} with effective pruning strategies, making our search practical despite its exponential time complexity.

Our experiments on German and Dutch show that our approach largely outperforms previous work in terms of continuous, discontinuous, and overall $F_1$ scores. To the best of our knowledge, our algorithm is the first that can handle non-binary constituents in the setting of unsupervised constituency parsing. 
 
To sum up, our main contributions conclude: (1)~proposing an ensemble approach to unsupervised discontinuous constituency parsing, (2)~theoretically analyzing the computational complexity of various tree-averaging settings, and (3)~conducting experiments on benchmark datasets to verify the effectiveness of our approach. 
\section{Approach}

\subsection{Unsupervised Discontinuous Constituency Parsing}
\label{sec:approach_UDCP}

In linguistics, a \textit{constituent} is one or more words that act as a semantic unit in a hierarchical tree structure~\citep{constituencyTree}. Discontinuous constituents are intriguing, where a constituent is split into two or more components by other words~\citep[Figure~\ref{fig:disco_example};][]{Parentheticals1982, 10.5555/532941}.

Unsupervised discontinuous constituency parsing aims to induce---without using linguistically annotated data for training---a parse structure that may contain discontinuous constituents. As mentioned in \S\ref{sec:intro}, unsupervised parsing is an important research topic, as it potentially helps low-resource languages, the development of linguistic and cognitive theory, as well as the processing of non-textual streaming data.

\citet{yang-etal-2023-unsupervised} propose the only known unsupervised discontinuous parsing approach, based on linear context-free rewriting systems~\citep[LCFRS;][]{vijay-shanker-etal-1987-characterizing}, a type of mildly context-sensitive grammars~\citep{joshi_1985}, which can model certain discontinuous constituents. 
They focus on binary LCFRS and limit the \textit{fan-out} of their grammar to be at most $2$, known as \mbox{LCFRS-2} \citep{stanojevic-steedman-2020-span}, meaning that each constituent can contain up to two nonadjacent components\footnote{A \textit{component} refers to a span of one or more consecutive words in a sentence.} (illustrated in Figure~\ref{fig:disco_example}). \citet{maier-etal-2012-plcfrs} observe such structures cover most cases in common treebanks. Essentially, the \mbox{LCFRS-2} grammar used in \citet{yang-etal-2023-unsupervised} is a 6-tuple $\mathcal{G} = (S, \mathcal{N}_1, \mathcal{N}_2, \mathcal{P}, \Sigma, \mathcal{R})$, where $S$ is the start symbol, $\mathcal{N}_1$ a finite set of non-terminal symbols with fan-out being $1$, $\mathcal{N}_2$ fan-out being $2$, $\mathcal{P}$ preterminals, and $\Sigma$ terminals. $\mathcal{R}$ is a finite set of rules, following one of the forms:
\begin{align}
\resizebox{.9\linewidth}{!}{$
\begin{aligned}
    \notag &R_1: S(x) \rightarrow A(x) &A\in\mathcal{N}_1\\
    \notag &R_2: A(xy) \rightarrow U(x)U'(y) &A\in\mathcal{N}_1\\
    \notag &R_3: A(xyz) \rightarrow U(y)B(x, z) &A\in\mathcal{N}_1\\
    \notag &R_4: A(x, y) \rightarrow U(x)U'(y) &A\in\mathcal{N}_2\\
    \notag &R_5: A(xy, z) \rightarrow U(x)B(y, z) &A\in\mathcal{N}_2\\
    \notag &R_6: A(xy, z) \rightarrow U(y)B(x, z) &A\in\mathcal{N}_2\\
    \notag &R_7: A(x, yz) \rightarrow U(y)B(x, z) &A\in\mathcal{N}_2\\
    \notag &R_8: A(x, yz) \rightarrow U(z)B(x, y) &A\in\mathcal{N}_2\\
    \notag &R_9: T(w) \rightarrow w &T\in\mathcal{P}, w\in\Sigma
\end{aligned}
$}
\end{align}
for $B\in\mathcal{N}_2$, and $U,U'\in\mathcal{N}_1\cup\mathcal{P}$. Note that $x$ in a rewriting rule $A(x)$ is not an input string. Instead, it is a placeholder suggesting that $A(x)$ has a fan-out $1$. Likewise, $A(x,y)$ has a fan-out $2$, with $x$ and $y$ being the two nonadjacent components. 

Such a grammar may handle certain types of discontinuous constituents. Take $R_3$ as an example. The left-hand side $A(xyz)$ implies the placeholder string $xyz$ is a constituent that can be split into three adjacent components: $x$, $y$, and $z$ in order. The notation $B(x,z)$ in the rule asserts that $x$ and~$z$ form a constituent, although they are not adjacent in the original string~$xyz$.

We follow \citet{yang-etal-2023-unsupervised} and train a probabilistic LCFRS-2, parametrized by a tensor
decomposition-based neural network (TN-LCFRS). The  objective is to maximize the likelihood of sentence reconstruction by marginalizing the grammar rules. Compared with general context-sensitive grammar, LCFRS-2 balances the modeling capacity and polynomial-time inference efficiency; thus, it is also used in supervised discontinuous parsing~\citep{maier-2010-direct,DOPDiscoCFT}. We refer interested readers to \citet{yang-etal-2023-unsupervised} for the details of TN-LCFRS training and inference.  

\begin{figure}[t]
\begin{center}
\includegraphics[width=0.62\linewidth]{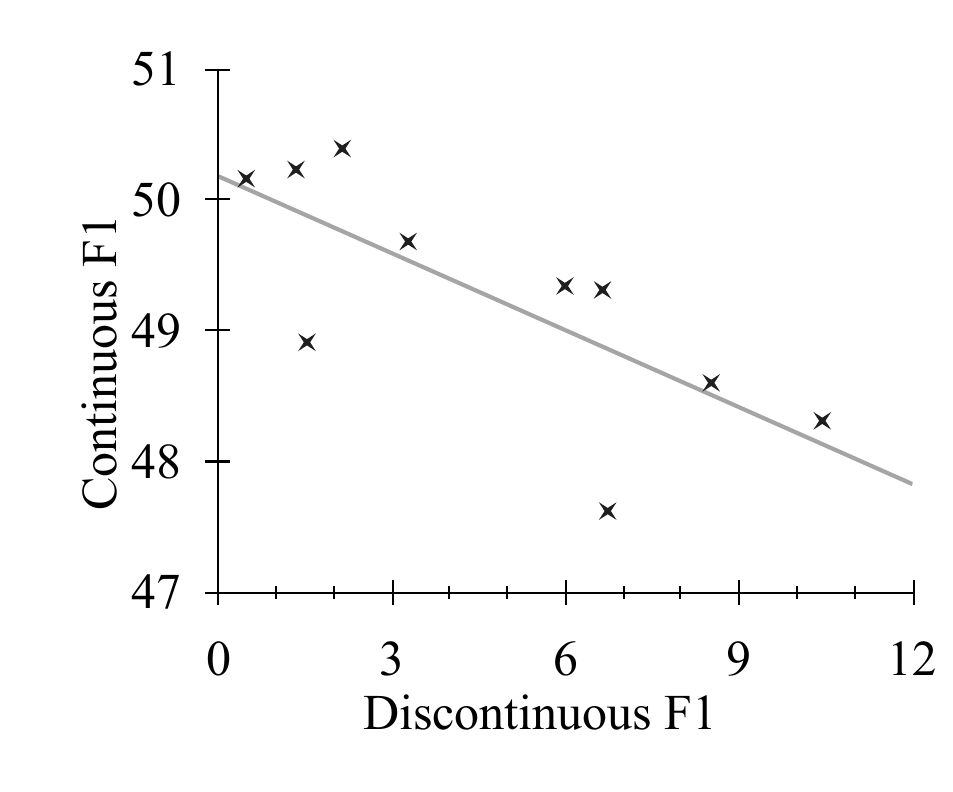}
\end{center}\vspace{-16pt}
\caption{$F_1$ scores on continuous and discontinuous constituents in the NEGRA test set~\citep{skut-etal-1997-annotation}, where each point is a random run of TN-LCFRS~\citep{yang-etal-2023-unsupervised}.}
\label{fig:cf1vsdf1}
\end{figure}

In our work, we observe that TN-LCFRS exhibits high variance in performance, especially for discontinuous constituents (Figure~\ref{fig:cf1vsdf1}). In addition, there is a negative correlation between performance on continuous and discontinuous constituents among different runs: some are better at continuous constituents, while others are better at discontinuous ones. 

In our previous work~\citep{shayegh2023ensemble}, we show that an ensemble model can utilize different expertise of existing continuous parsers and smooth out their noise. A natural question now is: \textit{Can we build an ensemble of \textbf{discontinuous} constituency parsers?}

In the rest of this section, we perform in-depth theoretical analysis of the problem, and show that with seemingly harmless alteration of setups, the problem may belong to either \textbf{P} or \textbf{NP}-complete complexity categories. 
We further develop a meet-in-the-middle search algorithm with efficient pruning to solve our problem efficiently in practice.

\subsection{Averaging over Constituency Trees}

We adopt the tree-averaging notion in~\citet{shayegh2023ensemble}, which suggests the ensemble output should be the tree with the highest average $F_1$-score against the ensemble components (referred to as \textit{individuals}): 
\begin{scaledalign}
T^* = \operatorname*{argmax}_{T \in \mathcal T} \sum_{k=1}^K F_1(T, T_k)
\label{eq:AvgTree}
\end{scaledalign}
where $\mathcal T$ is the search space given an input sentence, and $T_k$ is the parse tree predicted by the $k$th individual. The $F_1$ score is commonly used for evaluating constituency parsers, and is also our ensemble objective, given by
$
F_1(T_{\text{prd}}, T_{\text{ref}}) = \frac{2 |C(T_{\text{prd}}) \cap C(T_{\text{ref}})|}{|C(T_{\text{prd}})| + |C(T_{\text{ref}})|}
$,
where $T_{\text{prd}}$ and $T_{\text{ref}}$ denote the predicted and reference trees, respectively, while $C(T)$ represents the set of constituents in a tree $T$.

It is noted that existing unsupervised parsers can only produce binary trees in both continuous and discontinuous settings~\citep{shen2017neural,kim-etal-2019-compound,yang-etal-2023-unsupervised}. The binary property asserts that, given a length-$n$ sentence, $|C(T_k)|=2n-1$ for every individual~$k$, thus simplifying Eqn.~\eqref{eq:AvgTree} to
\begin{scaledalign}
\label{eq:SimpliedAvg}
T^* = \operatorname*{argmax}_{T \in \mathcal T}  \frac{\sum_{c \in C(T)} {\mathpzc h}(c)}{|C(T)|+2n-1}
\end{scaledalign}
Here, ${\mathpzc h}(c)$ counts the occurrences of a constituent~$c$ in the trees predicted by the individuals. We call ${\mathpzc h(c)}$ the \textit{hit count} of $c$. 

We point out that the output of our approach does not have to be a binary tree, and we will empirically analyze the output binarity in \S\ref{sec:results}.

We would like to examine the computational complexity for tree averaging, such as \textbf{P} and \textbf{NP} categories. To begin with, we consider the decision problem (i.e., whether there exists a tree satisfying some conditions) corresponding to the search problem (i.e., finding the best tree),  which is standard in complexity analysis~\citep{arora2009computational}.

\begin{restatable}[Averaging binary trees with bounded fan-out]{problem}{problemboundeddiscontinuous} \label{problem:boundeddiscontinuous}
Consider a number $z$ and constituency trees $T_1, \cdots,$ $T_K$ with the same leave nodes, where the trees are binary and have a fan-out of at most $F$. Is there a constituency tree $T$ such that $\sum_{k=1}^K F_1(T, T_k) \geq z$?
\end{restatable}

\begin{restatable}{theorem}{theoremboundeddiscontinuous} \label{thm:boundeddiscontinuous}
Problem~\ref{problem:boundeddiscontinuous} belongs to  \normalfont{\textbf{P}}.
\end{restatable}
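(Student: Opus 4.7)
The plan is to convert the decision problem into a polynomial-time weighted parsing task, and then recover the ratio optimization by parametric search. Because every $T_k$ is binary on the $n$ shared leaves, $|C(T_k)| = 2n - 1$, and, as in Eqn.~\eqref{eq:SimpliedAvg}, the $F_1$ sum reduces to $\sum_{k=1}^K F_1(T,T_k) = \frac{2 \sum_{c \in C(T)} h(c)}{|C(T)| + 2n - 1}$, where $h(c) = |\{k : c \in C(T_k)\}|$ is the hit count. Any constituent with $h(c)=0$ only inflates the denominator, so I may restrict candidates to the at most $K(2n-1)$ constituents that occur in some individual, each of which automatically has fan-out at most $F$.

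Next, I would linearize this ratio by the standard Dinkelbach / parametric trick: for any $\lambda$, the question ``$\sum_k F_1(T,T_k) \ge \lambda$'' is equivalent to $\max_T \sum_{c \in C(T)} (2 h(c) - \lambda) \ge \lambda(2n-1)$. Since $|C(T)| \le 2n-1$, the achievable ratio values are rationals with numerator at most $2K(2n-1)$ and denominator at most $4n-2$, so only $O(Kn^2)$ distinct values arise, and $O(\log(Kn))$ rounds of bisection on $\lambda$ pin down the exact optimum.

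For each fixed $\lambda$, I would solve the inner linear problem by dynamic programming over constituents. With weight $w(c) = 2h(c) - \lambda$ on each candidate, I compute $g(c) = w(c) + \max_{\Pi} \sum_{c' \in \Pi} g(c')$, where $\Pi$ ranges over partitions of the scope of $c$ into children, each of fan-out at most $F$; the answer is $g$ evaluated at the root constituent. This is weighted LCFRS-$F$ parsing with node-level scores: for fixed $F$, the candidate constituents number at most $n^{O(F)}$, and the intermediate ``residual-scope'' states inside the partition subroutine can likewise be described by $O(F)$ interval endpoints, so the whole computation runs in time polynomial in $n$ and $K$.

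The main obstacle is this partition subroutine: the output tree is not required to be binary, so a node may have arbitrarily many children, and one cannot simply invoke off-the-shelf binary CYK-style parsing. The key technical claim is that the fan-out bound forces both the candidate constituents and the partial-cover states of the inner DP to remain describable by $O(F)$ interval endpoints each, which is what keeps the whole procedure polynomial. The other pieces---the ratio reformulation, the pruning of $h(c)=0$ constituents, and the bisection on $\lambda$---are routine.
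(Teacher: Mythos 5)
Your setup is sound and partly overlaps with the paper's: you use the same reduction to hit counts via Eqn.~\eqref{eq:SimpliedAvg}, and your Dinkelbach-style linearization (testing $\sum_{c\in C(T)}(2{\mathpzc h}(c)-\lambda)\ge\lambda(2n-1)$ for fixed $\lambda$) is a legitimate alternative to the paper's device of adding a node-count axis $\tau$ to the DP table and maximizing $H(S,\tau)/(\tau+2n-1)$ over $\tau$ at the end; for the decision problem you could even skip the bisection and test $\lambda=z$ once. However, there is a genuine gap exactly where you flag it: the partition subroutine. You assert as a ``key technical claim'' that the partial-cover states arising while assembling the (possibly many) children of a non-binary node remain describable by $O(F)$ interval endpoints, but you give no argument, and the claim is not self-evident. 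A node may have up to $n$ children, each of fan-out at most $F$, and the union of a subset of those children is not in general a union of $O(F)$ intervals; this is precisely the LCFRS binarization issue, where it is known that assembling a rule's right-hand side incrementally can force intermediate items whose fan-out exceeds that of the rule's own nonterminals, and that choosing an assembly order minimizing intermediate fan-out is itself a hard combinatorial problem. Establishing (or circumventing) this claim is the actual content of the theorem, so as written the proposal does not yet yield membership in \textbf{P}.

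The paper's proof in Appendix~\ref{Apndx:proof:boundeddiscontinuous} resolves this with a concrete decomposition: the word sequence of a constituent $c$ is cut at $2F-1$ alternation points into $2F$ consecutive chunks, assigned alternately to two groups $o_{\bm j}$ and $e_{\bm j}$, each recursively decomposed; any group whose fan-out exceeds $F$ is discarded, and non-binarity is realized by an explicit include/exclude step ($H_\text{incl}$ versus $H_\text{excl}$) rather than by enumerating set partitions directly. Every DP state is then indexed by $2F$ endpoints plus the node budget, giving $\mathcal O(n^{4F+1})$ overall. A secondary issue: your pruning to the $O(Kn)$ constituents occurring in some individual is valid for the nodes of the output tree (by the argument underlying Theorem~\ref{thm:lowerbound}), but the intermediate groups of any such DP are not output constituents and cannot be restricted to observed ones, so the relevant state space is the $n^{O(F)}$ bounded-fan-out position sets rather than the $K(2n-1)$ observed candidates. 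You have correctly identified the central obstacle, but the proof of the theorem consists precisely in overcoming it.
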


\begin{proof}[Proof sketch]
In our previous work~\citep{shayegh2023ensemble}, we present a polynomial-time dynamic programming (DP) algorithm to average continuous binary trees, where the outputs are restricted to binary trees as well; here, continuous trees can be seen as having max fan-out~$1$. For non-binary outputs, the  DP table can be augmented by an additional axis whose size is bounded by the sentence length $n$. To handle discontinuity, we may augment the DP table with additional axes based on the maximum fan-out $F$.  
Overall, the time complexity of the DP algorithm is $\mathcal O(n^{4F+1})$; thus, Problem~\ref{problem:boundeddiscontinuous} is in \textbf{P}. See Appendix~\ref{Apndx:proof:boundeddiscontinuous} for the detailed proof.\footnote{\citet{corro-2023-dynamic} develops a CYK-like algorithm for span-based nested named-entity recognition that takes discontinuous and non-binary structures into account. However, their algorithm  limits its search space to constituents that contain at most one smaller multi-word constituent inside, making it not applicable to our scenario.}
\end{proof}

A follow-up question then is whether there is a polynomial-time algorithm when we relax the assumptions of input being binary and having bounded fan-out. Having non-binary inputs is intriguing since the above DP relies on Eqn.~\eqref{eq:SimpliedAvg}, which only holds for binary inputs.

\begin{restatable}[Averaging trees with bounded fan-out]{problem}{problemnonbinaryboundeddiscontinuous} \label{problem:nonbinaryboundeddiscontinuous}
Consider a number $z$ and constituency trees $T_1, \cdots,$ $T_K$ with the same leave nodes, where the fan-out is at most $F$ but the trees may be non-binary. Is there a constituency tree $T$ such that $\sum_{k=1}^K F_1(T, T_k) \geq z$?
\end{restatable}

\begin{restatable}{theorem}{theoremnonbinaryboundeddiscontinuous} \label{thm:nonbinaryboundeddiscontinuous}
Problem~\ref{problem:nonbinaryboundeddiscontinuous} belongs to  \normalfont{\textbf{P}}.
\end{restatable}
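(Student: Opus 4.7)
My plan is to reduce Problem~\ref{problem:nonbinaryboundeddiscontinuous} to $O(n)$ instances of a weighted variant of the dynamic program (DP) used in Theorem~\ref{thm:boundeddiscontinuous}. The new obstacle, compared with Problem~\ref{problem:boundeddiscontinuous}, is that with non-binary inputs $T_k$, the quantity $|C(T_k)|$ is no longer a single constant across $k$, so the simplification used to pass from Eqn.~\eqref{eq:AvgTree} to Eqn.~\eqref{eq:SimpliedAvg} breaks down: each $F_1(T,T_k)$ carries its own denominator, and the total objective does not reduce to a single sum of hit counts divided by one constant.

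The key observation is that once we fix the output size $m := |C(T)|$, every denominator $m + |C(T_k)|$ becomes a known constant, and the objective factors back into an additive form over constituents:
\[
\sum_{k=1}^K F_1(T,T_k) \;=\; \sum_{c\in C(T)} {\mathpzc h}_m(c), \qquad {\mathpzc h}_m(c) := \sum_{k=1}^K \frac{2\,\mathds{1}[c\in C(T_k)]}{m+|C(T_k)|}.
\]
For any constituency tree on $n$ leaves in which every internal node has at least two children, $m$ lies in $[n+1,\,2n-1]$, so the outer enumeration over $m$ has only $O(n)$ iterations.

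For each fixed $m$, I would run the DP of Theorem~\ref{thm:boundeddiscontinuous} with two small modifications: (i) each candidate constituent $c$ contributes its precomputed weight ${\mathpzc h}_m(c)$ rather than an integer hit count, and (ii) the size-tracking axis that was already introduced in that DP to permit non-binary outputs is constrained to equal $m$. Because at most $O(n^{2F})$ constituents have fan-out $\le F$, tabulating all weights ${\mathpzc h}_m(\cdot)$ takes polynomial time; each DP call still runs in $O(n^{4F+1})$, and multiplying by the $O(n)$ choices of $m$ yields total time $O(n^{4F+2})$, polynomial in both $n$ and $K$. This places Problem~\ref{problem:nonbinaryboundeddiscontinuous} in \textbf{P}.

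The main hurdle is bookkeeping rather than algorithmic. One has to verify that the size axis is incremented correctly by each rule form used to combine fan-out-$\le F$ constituents, so that the ``exactly $m$ constituents'' constraint truly tracks $|C(T)|$, and that every admissible non-binary fan-out-bounded tree is reached as $m$ sweeps its range. Once this is checked, reusing the machinery of Theorem~\ref{thm:boundeddiscontinuous} with weighted contributions in place of raw hit counts completes the argument; no genuinely new algorithmic idea is needed.
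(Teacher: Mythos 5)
Your proposal is correct and matches the paper's own proof essentially step for step: the paper likewise fixes the output size $\tau=|C(T)|$, observes that the objective then decomposes additively over constituents with a weighted hit count $\hat{\mathpzc h}(c)=\sum_k \mathds{1}[c\in C(T_k)]/(\tau+|C(T_k)|)$, solves a fixed-size maximization via the Theorem~\ref{thm:boundeddiscontinuous} DP (which already tracks subtree node counts), and sweeps the $O(n)$ values of $\tau$ for an overall $\mathcal{O}(n^{4F+2})$ bound. The only cosmetic difference is that you retain the factor of $2$ in the weights, which is immaterial for the argmax.
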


\begin{proof}[Proof sketch]
If we fix the number of nodes $\tau$ of the output tree, the above DP can be reused to solve this problem. We may enumerate all possible values of $\tau$, which must satisfy $n < \tau < 2n$. 
Thus, Problem~\ref{problem:nonbinaryboundeddiscontinuous} can be solved in $\mathcal{O}(n^{4F+2})$ time and belongs to~\textbf{P}. See Appendix~\ref{Apndx:proof:nonbinaryboundeddiscontinuous} for the proof.
\end{proof}

However, if the fan-out is unbounded, the difficulty is that the above DP table grows exponentially with respect to $F$. Since the problem is polynomial-time verifiable, it surely belongs to \textbf{NP}, but whether it belongs to \textbf{P}, \textbf{NP}-complete, or both remains unknown for binary inputs.

\begin{restatable}[Averaging binary trees]{openproblem}{openproblembinarydisco} \label{openproblem:binarydisco}
What is the exact complexity category (\textbf{P} or \textbf{NP}-complete) of averaging binary trees with unbounded fan-out?
\end{restatable}

For averaging general trees (non-binary inputs with unbounded fan-out), we can show that it belongs to \textbf{NP}-complete.

\begin{restatable}[Averaging trees]{problem}{problemnonbinarydisco} \label{problem:nonbinarydisco}
Consider a number~$z$ and constituency trees $T_1, \cdots,$ $T_K$ with the same leave nodes, where the fan-out is unbounded and the trees may be non-binary. Is there a constituency tree $T$ such that $\sum_{k=1}^K F_1(T, T_k) \geq z$?
\end{restatable}

\begin{restatable}{theorem}{theoremnonbinarydisco}\label{thm:nonbinarydisco}
Problem~\ref{problem:nonbinarydisco} belongs to \normalfont{\textbf{NP}\textit{-complete}}.
\end{restatable}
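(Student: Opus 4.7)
My plan is to establish Problem~\ref{problem:nonbinarydisco} belongs to \textbf{NP} and is \textbf{NP}-hard.

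For \textbf{NP} membership, I take the tree $T$ itself as a polynomial-size certificate, representing each constituent as a bit-vector of length $n$; since a tree with $n$ leaves has $O(n)$ constituents, the certificate is polynomially sized. Given such a certificate, one checks that $C(T)$ is laminar (any two constituents are disjoint or nested) and evaluates each $F_1(T, T_k) = 2|C(T) \cap C(T_k)|/(|C(T)| + |C(T_k)|)$ by direct set operations in polynomial time, summing and comparing with $z$.

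For \textbf{NP}-hardness, I would reduce from \textsc{Independent Set}. The leverage is that with unbounded fan-out, constituents can be arbitrary subsets of leaves, and the laminarity constraint on $C(T)$ becomes a rich combinatorial restriction capable of encoding vertex-selection problems. Given a graph $G=(V,E)$, I would construct a vertex constituent $S_v$ for every $v \in V$ via the standard edge-element gadget --- assigning each vertex $v$ a private leaf $a_v$, each edge $e$ a private leaf $b_e$, and setting $S_v = \{a_v\} \cup \{b_e : v \in e\}$ --- so that $S_u$ and $S_v$ are laminar iff $\{u,v\}\notin E$. These vertex constituents would be distributed as the distinguishing constituents of the individuals $T_1, \ldots, T_K$, and a candidate $T$ would correspond to selecting a laminar subfamily of $\{S_v\}_{v\in V}$, i.e., an independent set in $G$; exceeding the threshold $z$ would then force this subfamily to be large.

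The hardest step will be controlling the ratio form of $F_1$, which, unlike the binary-input simplification of Eqn.~\eqref{eq:SimpliedAvg}, does not decouple the numerator from $|C(T)|$. My plan is to pad every individual $T_k$ with a large common skeleton of constituents (for instance, a deep nested chain that any sensible candidate $T$ is forced to include) so that $|C(T)|$ and each $|C(T_k)|$ are approximately fixed on near-optimal solutions and the objective becomes dominated by the numerator $|C(T) \cap C(T_k)|$. The non-binarity of the input is crucial here, as it lets me pack arbitrarily many skeleton constituents into each $T_k$ without perturbing the vertex-encoding; this is precisely why Problem~\ref{problem:nonbinarydisco} lands in \textbf{NP}-complete while Open Problem~\ref{openproblem:binarydisco} (binary inputs) remains open. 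A secondary technicality is ensuring each individual $T_k$ is itself a valid constituency tree --- i.e., its constituents form a laminar family --- which I would handle by routine gadgeting that distributes the pairwise non-laminar pairs among $\{S_v\}$ across distinct individuals.
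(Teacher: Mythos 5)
Your \textbf{NP}-membership argument is fine and matches the paper's. Your hardness gadget is also essentially the paper's: the paper reduces from Max Clique by giving each vertex $u_i$ a private word $t_i$ and each \emph{non}-edge $\{u_i,u_j\}$ a shared word $t_i^j$ placed in both $S_i$ and $S_j$, so that $S_i$ and $S_j$ can coexist in a constituency tree iff $u_iu_j$ is an edge; your Independent Set version is the same construction applied to the complement graph. Both you and the paper also make each $S_i$ the single non-trivial constituent of its own individual $T_i$, which disposes of your ``secondary technicality.''

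The genuine gap is the step you yourself flag as hardest, and the fix you propose goes in the wrong direction. Suppose the candidate tree and every individual share a common block of constituents of size $N$ (trivial constituents plus your skeleton) and all denominators are about $D$. Adding one vertex constituent $S_v$, which is hit by exactly one individual, changes $\sum_k F_1(T,T_k)$ by $\frac{2}{D+1} - K\cdot\frac{N}{D(D+1)} = \frac{2D-KN}{D(D+1)}$: one numerator gains $2$ while all $K$ denominators gain $1$. Padding every individual with a large common skeleton drives $N$ toward $D$, so for $K\ge 3$ this marginal change becomes \emph{negative}; far from making the objective ``dominated by the numerator,'' the skeleton makes selecting any $S_v$ strictly harmful, and the optimum degenerates to selecting none of them, so the threshold no longer certifies a large independent set. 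What the reduction actually requires is (i) an exact closed-form for $\sum_k F_1(T,T_k)$ as a function of the number $m$ of selected vertex constituents, together with a verification that the threshold comparison really is equivalent to $m\ge z$ --- this monotonicity is delicate precisely because the gain of $2$ in a single numerator competes against $|V|$ denominators each growing by $1$, and it is where the paper concentrates its effort (deriving the clique-tree score $\frac{2m+2|V|(n+1)}{m+2n+3}$ and setting $\tilde z$ to its value at $m=z$); and (ii) an argument that an arbitrary tree achieving the threshold can be converted to one built only from gadget constituents without lowering the score --- the paper does this by removing all zero-hit constituents, which leaves every numerator unchanged while shrinking every denominator. Your proposal supplies neither piece, so as it stands it is the right gadget attached to an objective analysis that does not work.
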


\begin{proof}[Proof sketch]
It is easy to show that, given a certificate, Problem~\ref{problem:nonbinarydisco} is polynomial-time verifiable. For the completeness, we reduce the max clique problem, a known \textbf{NP}-complete problem, to Problem~\ref{problem:nonbinarydisco}. See Appendix~\ref{Apndx:proof:nonbinarydisco} for the detailed proof.
\end{proof}

\begin{table}[t]
\centering
\resizebox{.9\linewidth}{!}{
\begin{tabular}{|l|c|c|}
\hline
  \diagbox{Binary\\[-3pt]individuals?}{Bounded\\[-3pt] fan-out?\\[-11pt]} & Bounded & Unbounded\\
  \hline
Binary & \textbf{P} & Unknown \\ 
  \hline
Non-binary & \textbf{P} & \textbf{NP}-complete \\
\hline
\end{tabular}

}\vspace{-.1cm}
\caption{Summary of the complexity categories of tree-averaging problems.}
\label{table:complexity_conclusion}
\end{table}

The above theoretical analysis provides a deep understanding of building tree ensembles, summarized in Table~\ref{table:complexity_conclusion}. In our experiments, the task falls into Problem~\ref{problem:boundeddiscontinuous} because the only existing unsupervised discontinuous parser~\citep{yang-etal-2023-unsupervised} produces binary trees with fan-out at most $2$. The problem belongs to \textbf{P}.

That being said, a practical approach should consider not only algorithmic complexity, but also the specific properties of the task at hand. A DP algorithm that one may develop for our task has a complexity of $\mathcal{O}(n^9)$, as discussed in the proof of Theorem~\ref{thm:boundeddiscontinuous}, which appears impractical despite its polynomial time complexity. 

In the rest of this section, we develop a more general search algorithm that works at the level of  Open Problem~\ref{openproblem:binarydisco} but also solves Problem~\ref{problem:boundeddiscontinuous} more efficiently in practice than high-order polynomial DP.

\subsection{Our Search Algorithm}\label{sec:our_search_algo}

Our work concerns building an ensemble of binary, bounded-fan-out, discontinuous trees generated by the unsupervised parser in \citet{yang-etal-2023-unsupervised}. 

We will develop a general search algorithm (regardless of fan-out) with strong pruning that only needs to consider a few candidate constituents, bringing down the $\mathcal O(2^{2^n})$ complexity of exhaustive search\footnote{For a length-$n$ sentence, there are $2^n-1$ possible constituents because an arbitrary non-empty set of words can be a constituent. To build a constituency tree, the exhaustive search needs to look into any combination of constituents.} to $\mathcal O(2^n)$. We will further halve the exponent using a meet-in-the-middle technique, resulting in an overall complexity of $\mathcal O(2^{\frac{n}{2}}n^2)$. As a result, our algorithm remains exact search while being efficient for all samples in the dataset; when a sentence is short, our algorithm is even much faster than the DP with $\mathcal{O}(n^9)$ complexity.

To solve the tree-averaging problem, we first convert it into an equivalent graph problem for clarity. Specifically, we construct an undirected graph $G$ with vertices being all possible constituents (although they can be largely pruned), weighted by corresponding hit counts, as defined after Eqn.~\eqref{eq:SimpliedAvg}. We put an edge between two vertices if and only if their corresponding constituents can coexist within a constituency tree, that is to say, they are either disjoint or inclusive of one another. Then, we formulate the below graph-search problem, which can solve the original tree-search problem for ensemble discontinuous parsing.

\begin{restatable}[Normalized Max Weighted Clique]{problem}{problemNMWC}\label{problem:NMWC}
    Consider a weighted undirected graph $G=\langle V, E\rangle$ and an objective function $f(Q; \alpha_1,\alpha_2) = \frac{\sum_{v\in Q} w(v)+\alpha_1}{|Q|+\alpha_2}$, where $\alpha_1,\alpha_2\in\mathbb R$, $Q\subseteq V$,  and $w(v)$ is the weight of $v$. What is the clique $Q$ that maximizes $f(Q; \alpha_1,\alpha_2)$?
\end{restatable}

\begin{restatable}{lemma}{lemmaNMWC}\label{lemma:NMWC}
    Given $G$ as described above, the clique $Q^*$ maximizing $f(Q; \alpha_1, \alpha_2)$ corresponds to a constituency tree, if $\alpha_1 \leq K\alpha_2$ where $K$ is the number of individuals.
\qualification{ (Proof in Appendix~\ref{Apndx:proof:lemmaNMWC}.)}
\end{restatable}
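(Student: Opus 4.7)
The plan is to prove the lemma via two ingredients: (i) the root constituent together with every singleton-word constituent can be assumed to lie in any optimal clique $Q^*$ without loss of optimality, and (ii) any clique containing these distinguished vertices is automatically the constituent set of some valid constituency tree.

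First, I would single out the set $V_0\subseteq V$ consisting of the full-sentence constituent and the $n$ singleton-word constituents. Each of these appears in every one of the $K$ individual trees, hence has hit-count $w(v)=K$, the maximum possible weight. Moreover, the root contains every other constituent, and a singleton is either contained in or disjoint from any other constituent; hence every $v\in V_0$ is adjacent in $G$ to all other vertices, so $Q\cup V_0$ is a clique whenever $Q$ is.

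Next, I would verify the monotonicity claim: for any clique $Q$ and any $v\in V_0\setminus Q$, we have $f(Q\cup\{v\};\alpha_1,\alpha_2)\ge f(Q;\alpha_1,\alpha_2)$. Writing $W=\sum_{u\in Q}w(u)$ and $N=|Q|$, a short cross-multiplication shows that the sign of $f(Q\cup\{v\})-f(Q)$ equals the sign of
\begin{equation*}
KN+K\alpha_2-W-\alpha_1.
\end{equation*}
Since $w(u)\le K$ for every vertex, $W\le KN$; combined with the hypothesis $\alpha_1\le K\alpha_2$, this yields $KN+K\alpha_2\ge W+\alpha_1$, so $f$ does not decrease. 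Iterating over the finitely many elements of $V_0$, I may replace $Q^*$ by $Q^*\cup V_0$ without losing optimality, so WLOG $V_0\subseteq Q^*$.

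Finally, I would argue that any clique $Q\supseteq V_0$ is the constituent set of a valid constituency tree. The clique property makes the containment relation on $Q$ a forest, and the presence of the root in $Q$ makes it a single-rooted tree. For each non-leaf $c\in Q$, define its children as the maximal proper sub-constituents of $c$ inside $Q$; these are pairwise disjoint by cliqueness. The main subtle step, which I expect to be the principal obstacle, is showing that these children \emph{partition} (not merely cover) $c$: this follows because each word $w\in c$ lies in some singleton $\{w\}\in V_0\subseteq Q$, and chaining upward from $\{w\}$ toward $c$ within $Q$ picks out exactly one maximal proper sub-constituent of $c$ that contains $w$. Hence $Q^*$ defines a well-formed constituency tree, completing the proof.
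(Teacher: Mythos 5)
Your proof is correct and follows essentially the same route as the paper's: the trivial constituents (the whole-sentence constituent and the singletons) are universal vertices of weight $K$, and the hypothesis $\alpha_1 \leq K\alpha_2$ together with $w(v)\leq K$ guarantees that adjoining such a vertex never decreases $f$, so they may be assumed to lie in $Q^*$. The only differences are presentational: you argue monotonicity directly and then explicitly verify that a clique containing these vertices yields a laminar family whose maximal proper sub-constituents partition each node, whereas the paper runs the same weight computation as a proof by contradiction and simply asserts that ``contains all trivial constituents and is pairwise compatible'' characterizes constituency trees --- your extra step is a welcome piece of rigor, not a divergence.
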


\begin{restatable}{theorem}{theoremNMWC}\label{thm:NMWC}
    The average constituency tree $T^*$ in Eqn.~(\ref{eq:SimpliedAvg}) is polynomial-time reducible from $Q^*$ in Problem~\ref{problem:NMWC} with $\alpha_1=0$ and $\alpha_2=2n-1$, if $G$ is built as above.
\end{restatable}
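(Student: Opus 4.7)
The plan has three ingredients. First, I would verify that Lemma~\ref{lemma:NMWC}'s hypothesis $\alpha_1 \leq K\alpha_2$ is satisfied for $\alpha_1=0$ and $\alpha_2=2n-1$: since $K\geq 1$ and $n\geq 1$, we have $0 \leq K(2n-1)$ trivially, so any optimal clique $Q^*$ of $f(\cdot;\,0,\,2n-1)$ corresponds to the constituent set of some constituency tree, which I will denote $T_{Q^*}$. Second, I would establish the reverse direction of the correspondence: for any constituency tree $T\in\mathcal T$, every $c\in C(T)$ is a vertex of $G$ (by construction, every possible constituent is included), and any two constituents of a tree are either disjoint or nested, so $C(T)$ forms a clique in $G$ by the definition of the edge set. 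Together these give a correspondence $T \leftrightarrow C(T)$ between constituency trees and those cliques that are maximal under the nested/disjoint relation.

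Next I would plug the chosen parameters into $f$ and line it up with Eqn.~(\ref{eq:SimpliedAvg}). Since $w(v)=\mathpzc{h}(c)$ when vertex $v$ represents constituent $c$, the clique $Q_T := C(T)$ satisfies
\begin{equation*}
f(Q_T;\,0,\,2n-1) \;=\; \frac{\sum_{c\in C(T)} \mathpzc{h}(c)}{|C(T)|+2n-1},
\end{equation*}
which is exactly the objective being maximized in Eqn.~(\ref{eq:SimpliedAvg}).

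Combining the two ingredients yields matching bounds on the maxima. On one hand, every tree $T$ contributes a feasible clique $Q_T$, so $\max_Q f(Q;\,0,\,2n-1) \geq \max_{T\in\mathcal T} \frac{\sum_c \mathpzc{h}(c)}{|C(T)|+2n-1}$. On the other hand, by Lemma~\ref{lemma:NMWC} the optimal clique satisfies $Q^* = C(T_{Q^*})$ for some $T_{Q^*}\in\mathcal T$, giving the reverse inequality. Hence the two maxima coincide and $T_{Q^*}$ attains the argmax in Eqn.~(\ref{eq:SimpliedAvg}), i.e., $T^* = T_{Q^*}$. The reduction itself is then immediate: given $Q^*$, output the (unique) tree whose constituent set equals the vertex set of $Q^*$, which takes time polynomial in $|Q^*|$.

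The main obstacle I anticipate is not the algebraic matching of objectives but ensuring that the invocation of Lemma~\ref{lemma:NMWC} is airtight — in particular, that its conclusion genuinely supplies a two-way correspondence so that the argmax over cliques really aligns with the argmax over trees (as opposed to merely producing equal objective values). If uniqueness of the tree induced by the vertex set of $Q^*$ requires a separate argument, I would obtain it by induction on nesting depth, using the fact that the pairwise disjoint-or-nested property, combined with the presence of the full-span root constituent, forces a unique parent-child structure on $Q^*$.
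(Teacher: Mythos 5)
Your proposal is correct and follows essentially the same route as the paper's proof: match the objective $f(Q;0,2n-1)$ with Eqn.~(\ref{eq:SimpliedAvg}) via $w(v)=\mathpzc{h}(c)$, note that every constituency tree yields a clique, and invoke Lemma~\ref{lemma:NMWC} (whose hypothesis $0\leq K(2n-1)$ you rightly check, a step the paper leaves implicit) to conclude that the optimal clique is itself a tree, so the two argmaxes coincide. Your closing worry about uniqueness is a non-issue since an unlabeled constituency tree is determined by its constituent set, but raising and resolving it does no harm.
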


\begin{proof}
Suppose the constituents corresponding to a clique $Q$ in the graph $G$ form a constituency tree, denoted by $T_Q$. We have $\sum_{v \in Q} w(v) = \sum_{c\in T_Q} {\mathpzc h}(c)$ by the construction of $G$. Thus, the objective function of Problem~\ref{problem:NMWC} is
\begin{scaledalign}
f(Q; 0, 2n-1) &= \frac{\sum_{v\in Q} w(v) + 0}{|Q|+2n-1}\\
&= \frac{\sum_{c \in C(T_Q)} {\mathpzc h}(c)}{|C(T_Q)|+2n-1}
\end{scaledalign}
which is the same as the $\operatorname{argmax}$ objective in Eqn.~\eqref{eq:SimpliedAvg}.
It is easy to see that each constituency tree corresponds to a clique in $G$. Therefore, the corresponding constituency tree to $Q^*$, guaranteed by Lemma~\ref{lemma:NMWC}, maximizes Eqn.~\eqref{eq:SimpliedAvg}. In other words, we have $T^*=T_{Q^*}$.
\end{proof}

Problem~\ref{problem:NMWC} generalizes a standard max clique problem~\citep{arora2009computational}, which may be solved in $\mathcal{O}(2^{|V|})$ time complexity by exhaustive search. The meet-in-the-middle technique~\citep{horowitz1974computing} can be used to address the max clique problem, reducing the complexity from $\mathcal{O}(2^{|V|})$ to $\mathcal{O}(2^\frac{|V|}{2}|V|^2)$. In Appendix~\ref{apndx:meetinthemiddle}, we develop a variant of the meet-in-the-middle algorithm to solve Problem~\ref{problem:NMWC}.

\subsection{Candidate Constituents Pruning}\label{sec:pruning}
The efficiency of our algorithm for Problem~\ref{problem:NMWC} depends on the number of vertices in the graph. In our construction, the vertices in $G$ correspond to possible constituents, which we call \textit{candidates}. 

A na\"ive approach may consider all  $\mathcal O(2^n)$ possible constituents, which are non-empty combinations of words in a length-$n$ sentence. Thus, our meet-in-the-middle algorithm has an overall  complexity of $\mathcal O(2^{2^{n-1}+2n})$ for tree averaging. 

In this part, we theoretically derive lower and upper bounds for a candidate's hit count. If a constituent has a lower hit count than the lower bound, it is guaranteed not to appear in the average tree. On the other hand, a constituent must appear in the average tree, if it has a higher hit count than the upper bound. We may exclude both cases from our search process, and directly add the must-appear candidates to the solution in a \textit{post hoc} fashion.

Let $P$ be a set of constituents that are known to be in the average tree $T^*$. Here, $P$ may even be an empty set. We may derive a hit-count lower bound for other constituents (not in $P$), as stated in the following theorem.
\begin{restatable}[Lower bound]{theorem}{theoremlowerbound}\label{thm:lowerbound}
For every constituent $c\in C(T^*)\backslash P$, where $P\subseteq C(T^*)$, we have
\begin{scaledalign}
{\mathpzc h}(c) > \min_{|P| \leq j \leq 2n-2} \frac{\sum_{i=1}^{j-|P|} \lambda^{+}_i \;+\; \sum_{c' \in P} {\mathpzc h}(c')}{j+2n-1}
\end{scaledalign}
where $\lambda^{+}_i$ is the $i$th smallest positive hit count.
\qualification{ (Proof in Appendix~\ref{Apndx:proof:lowerbound}.)}
\end{restatable}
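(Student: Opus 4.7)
The plan is to exploit the optimality of $T^*$ through a local-modification argument. Fix a non-leaf $c \in C(T^*) \setminus P$ and form the contracted tree $T'$ by removing the internal node corresponding to $c$ and re-attaching its children to $c$'s parent. Then $T'$ is a valid constituency tree with $|C(T')|=|C(T^*)|-1$, and since the set of words beneath every remaining node is unchanged, fan-out is preserved; thus $T'$ lies in the same search space as $T^*$. Applying the optimality inequality $F(T^*) \geq F(T')$ and clearing denominators simplifies to
\[
{\mathpzc h}(c) \;\geq\; \frac{\sum_{c' \in C(T')} {\mathpzc h}(c')}{|C(T')| + 2n - 1}.
\]

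Next I lower-bound the numerator. Let $j := |C(T')|$ and split the sum as $\sum_{c' \in P} {\mathpzc h}(c') + \sum_{c' \in C(T') \setminus P} {\mathpzc h}(c')$. The first summand is known. For the second, I prove the key auxiliary fact that every constituent in $T^*$ has strictly positive hit count: were some $c'' \in C(T^*)$ to have ${\mathpzc h}(c'') = 0$, contracting it would leave the numerator untouched while shrinking the denominator, strictly improving $F$ and contradicting the optimality of $T^*$. Hence the $j - |P|$ distinct constituents of $C(T') \setminus P$ each contribute a positive hit count, and a standard rearrangement argument gives $\sum_{c' \in C(T')\setminus P} {\mathpzc h}(c') \geq \sum_{i=1}^{j-|P|} \lambda^+_i$, regardless of whether the smallest positive hit counts correspond to constituents inside or outside $P$ (removing $P$-items from the pool can only raise the smallest remaining values, not lower them).

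Combining the two steps bounds ${\mathpzc h}(c)$ below by an expression depending on the a priori unknown value $j$; since $j$ must satisfy $|P| \leq j \leq 2n - 2$ (the upper limit follows from $|C(T^*)| \leq 2n - 1$ for any constituency tree over $n$ words, minus one for the contracted node), minimizing the RHS over this finite range yields the claimed bound. The boundary case where $c$ is a single-word leaf of $T^*$ is handled separately by observing that ${\mathpzc h}(c) = K$---every individual tree contains each singleton leaf---while the RHS is a weighted average of values at most $K$ with a denominator strictly exceeding the sum of implicit weights, hence strictly smaller than $K$.

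The main obstacle I expect is establishing the \emph{strict} inequality $>$ rather than $\geq$. The contraction argument delivers $\geq$ directly, so strictness has to be recovered from some slack in the two inequalities above. Two natural sources: (i)~the true $A$ in $T'$ almost always strictly exceeds $\sum_{c' \in P} {\mathpzc h}(c') + \sum_{i=1}^{j-|P|} \lambda^+_i$ because $C(T')$ necessarily contains the full-sentence root with hit count $K$, which far exceeds the smallest positive hit counts; or (ii)~a case split on whether the minimizer $j^\star$ of the RHS equals the actual $|C(T')|$: if not, strictness is automatic since the RHS is only decreased by the minimization, and if so, the slack from (i) closes the gap. Verifying that one of these arguments delivers strict inequality uniformly over all admissible inputs is the most delicate part of the proof.
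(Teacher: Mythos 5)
Your proposal follows essentially the same route as the paper's proof: delete $c$ from $T^*$ to form $T'$, compare the two objectives, rearrange the resulting mediant inequality to isolate ${\mathpzc h}(c)$, lower-bound $\sum_{c'\in C(T')}{\mathpzc h}(c')$ by $\sum_{c'\in P}{\mathpzc h}(c')$ plus the $|C(T')|-|P|$ smallest positive hit counts (using the fact that every constituent of the optimal tree has positive hit count), and finally minimize over the unknown $j=|C(T')|$ ranging over $|P|\le j\le 2n-2$. Your auxiliary positivity lemma and your explicit treatment of the single-word boundary case are sound and, if anything, slightly more careful than the paper's write-up; just note that the whole-sentence root needs the same special handling as the leaves (it cannot be "contracted" either), and the identical ${\mathpzc h}(c)=K$ argument covers it.

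The one substantive issue is strictness, which you correctly flag but do not resolve: neither of your proposed slack sources works in general. Source (i) fails when the root lies in $P$ (its hit count is then accounted for exactly) or when every positive hit count equals $K$, and source (ii) fails when the minimizer happens to coincide with the true $|C(T')|$. The paper instead takes the very first comparison to be strict --- the objective of $T^*$ strictly exceeds that of $T'$ --- and everything downstream inherits the strict inequality. That is where you should recover $>$: if $T'$ tied with $T^*$, then $T'$ would itself be an optimal tree not containing $c$, so either apply the argument to a minimum-size optimal tree (for which deleting any constituent is strictly worse), or observe that for the pruning application a constituent achieving only equality can still be safely discarded because some optimal tree survives without it. With that adjustment, the rest of your argument goes through exactly as written.
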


The theorem suggests that constituents having a hit count lower than the above threshold can be removed from the search. If we set $P=\emptyset$, we have $\mathpzc h(c)>0$, immediately pruning all zero-hit constituents. In other words, we may only consider the constituents appearing in at least one individual, which largely reduces the graph size from $\mathcal O(2^n)$ to $\mathcal O(nK)$ for $K$ individuals. In fact, the graph can be further pruned with the below theorem.

\begin{restatable}[Upper bound]{theorem}{theoremupperbound}\label{thm:upperbound}
   Let $c$ be a constituent with a hit count of $K$, where $K$ is the number of individuals. (a) The constituent $c$ is compatible---i.e., may occur in the same constituency tree---with every constituent in the average tree. (b) The constituent $c$ appears in the average tree.  
\qualification{ (Proof in Appendix~\ref{Apndx:proof:upperbound}.)}
\end{restatable}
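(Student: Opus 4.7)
The plan is to exploit the fact that a hit count of $K$ is the maximum possible, so $c$ must appear in every individual tree $T_1,\dots,T_K$, and then to combine this with an optimality argument on $T^*$. For part~(a), I would invoke Theorem~\ref{thm:lowerbound} with $P=\emptyset$: the right-hand side minimum is attained at $j=0$ and evaluates to $0$ (an empty sum over an empty $P$), so every $c'\in C(T^*)$ satisfies $\mathpzc{h}(c')>0$ and therefore appears in at least one individual $T_k$. Since $c$ appears in every individual, $c$ and $c'$ coexist inside $T_k$, which forces them to be compatible (nested or disjoint). Pairwise compatibility of $c$ with the laminar family $C(T^*)$ then lets me insert $c$ into the hierarchy between its smallest enclosing ancestor in $T^*$ and the maximal constituents of $T^*$ contained in $c$, producing a valid constituency tree $T'$ whose constituent set is $C(T^*)\cup\{c\}$.

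For part~(b), I would argue by contradiction, assuming $c\notin C(T^*)$, so the tree $T'$ from part~(a) is strictly richer than $T^*$. Write $S=\sum_{c'\in C(T^*)}\mathpzc{h}(c')$ and $m=|C(T^*)|$. The objective in Eqn.~\eqref{eq:SimpliedAvg} evaluates to $S/(m+2n-1)$ at $T^*$ and to $(S+K)/(m+2n)$ at $T'$. Cross-multiplying reduces the comparison $f(T')>f(T^*)$ to the inequality $K(m+2n-1)>S$. Since $\mathpzc{h}(c')\leq K$ for every constituent, I get $S\leq Km<Km+K(2n-1)=K(m+2n-1)$ whenever $n\geq 1$. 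Thus $T^*$ would be strictly suboptimal, contradicting its definition, so $c\in C(T^*)$.

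The main obstacle I anticipate is not the arithmetic but the tree-theoretic step in part~(a): verifying that a laminar family that already contains the full span and the singletons, together with the inserted $c$, really does correspond to a well-defined \textit{discontinuous} constituency tree under the conventions of \S\ref{sec:approach_UDCP}. I would make the parent-children reassignment explicit for the non-contiguous case, showing that the children of $c$'s smallest ancestor $p$ split cleanly into those covered by $c$ (which become the children of the new node $c$) and the rest (which remain siblings of $c$ under $p$). A secondary subtlety is that Theorem~\ref{thm:lowerbound} is stated for $P\subseteq C(T^*)$; I would note that $P=\emptyset$ trivially qualifies, so the argument simplifies cleanly to $\mathpzc{h}(c')>0$ for all $c'\in C(T^*)$.
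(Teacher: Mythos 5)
Your proposal is correct and follows essentially the same route as the paper: part (a) uses Theorem~\ref{thm:lowerbound} with $P=\emptyset$ to conclude every constituent of $T^*$ has a positive hit count and hence coexists with $c$ in some individual, and part (b) is the same exchange-argument contradiction, comparing the objective of $T^*$ with that of $C(T^*)\cup\{c\}$ and using $\mathpzc h(c')\leq K$. Your explicit attention to the laminar-insertion step and the strictness of the improvement is slightly more careful than the paper's one-line assertion that $\hat{T}$ exists, but the substance is identical.
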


Theorem~\ref{thm:upperbound}b suggests that the search process may exclude the constituents that occur in all individuals, denoted by $P=\{c: \mathpzc h(c)=K\}$. In this case, we may solve a reduced version of Problem~\ref{problem:NMWC} with the pruned graph and $\alpha_1 = \sum_{c' \in P} {\mathpzc h}(c')$, $\alpha_2 = |P|+2n-1$. The $\alpha_1$ and $\alpha_2$ in Theorem~\ref{thm:NMWC} can be modified accordingly.\footnote{The proof is parallel to that of the original version of Theorem~\ref{thm:NMWC}, and we leave it as an exercise for readers.} When we add $P$ back to the solution, the connectivity of $P$ is guaranteed by Theorem~\ref{thm:upperbound}a, forming the solution (guaranteed to be a clique) for the original Problem~\ref{problem:NMWC}.

To analyze the worst-case complexity, we notice that single words are constituents that must occur in all individuals. At least, we may set $P=\{c: c \text{ is a single word}\}$, and Theorem~\ref{thm:lowerbound} yields a lower bound of
\begin{scaledalign}
\label{eq:naive_lower_bound}
{\mathpzc h}(c) > \min_{n \leq j \leq 2n-2}\; \frac{0 + nK}{j+2n-1} > \frac{nK}{4n} = \frac{K}{4}
\end{scaledalign}
Then, we can find an upper bound for the number of candidates kept for the search process. Since the sum of all hit counts is always $(2n-1)K$, the number of candidates with a higher hit count than the threshold in Eqn.~\eqref{eq:naive_lower_bound} is bounded by
\begin{scaledalign}
|V| \leq \Big\lfloor \frac{(2n-1)K}{K/4} \Big\rfloor < 8n
\end{scaledalign}
This shows that our pruning mechanism, profoundly, reduces the graph size from $\mathcal{O}(2^n)$ to $\mathcal{O}(n)$, which in turn reduces the overall time complexity of our tree averaging from $\mathcal O(2^{2^{n-1}+2n})$ to $\mathcal{O}(2^\frac{n}{2}n^2)$ in the worst-case scenario.

In practice, the pruning is even more effective when the individuals either largely agree or largely disagree with each other. In the former case, we will have more all-hit constituents excluded from the search and consequently the lower bound increases, whereas in the latter case, many constituents will fall short of the lower bound. Empirically, our algorithm is efficient for all samples in our experiments.

\section{Experiments}

\subsection{Settings}
\label{sec:settings}

\textbf{Datasets.}
We evaluated our method on Dutch and German datasets, where discontinuous constituents are relatively common. In particular, we used the LASSY treebank~\citep{vanNoord2013} for Dutch. For German, we trained our individuals on the union of NEGRA~\citep{skut-etal-1997-annotation} and TIGER~\citep{TIGER_TB} treebanks, while testing and reporting the performance on their test sets respectively. Our settings strictly followed~\citet{yang-etal-2023-unsupervised} for fair comparison.

\textbf{Evaluation Metrics.} $F_1$ scores are commonly used for evaluating constituency parsing performance~\citep{klein2005unsupervised, shen2017neural, shen2018ordered, kim-etal-2019-compound, kim-etal-2019-unsupervised}. In our work of unsupervised discontinuous constituency parsing, we report corpus-level $F_1$ scores of all constituents, 
continuous constituents, and discontinuous constituents, denoted by $F_1^{\text{overall}}$,
$F_1^{\text{cont}}$, and
$F_1^{\text{disco}}$, respectively. We followed the previous work~\citep{yang-etal-2023-unsupervised}, which discards punctuation and excludes trivial constituents (the whole sentence and single words) when calculating the $F_1$ scores.

\textbf{Setups of Our Ensemble.} As mentioned in \S\ref{sec:approach_UDCP}, we used different runs of the tensor decomposition-based neural LCFRS~\cite[TN-LCFRS;][]{yang-etal-2023-unsupervised} as our individual models. Our experiment was based on the released code\footnote{https://github.com/sustcsonglin/TN-LCFRS} with default training hyperparameters. However, we find the TN-LCFRS is highly unstable, with a very high variance and lower overall performance than \citet{yang-etal-2023-unsupervised}. As a remedy, we trained the model $10$ times and selected the top five based on validation $F_1^\text{overall}$, and this yields performance close to \citet{yang-etal-2023-unsupervised}. In addition, we observe that random weight initializations lead to near-zero correlation of the predicted discontinuous constituents, but our ensemble method expects the individuals to at least agree with each other to some extent. Therefore, our different runs started with the same weight initialization\footnote{Our pilot study shows the proposed ensemble method is not sensitive to the initialization, as long as it is shared among different runs. This is also supported by the evidence that our approach performs well consistently on three datasets.} but randomly shuffled the order of training samples to achieve stochasticity.

Our ensemble method does not have hyperparameters. However, $F_1^{\text{overall}}$ and  $F_1^{\text{disco}}$ may not correlate well. Since our ensemble objective is solely based on $F_1^{\text{overall}}$, we may enhance $F_1^{\text{disco}}$ by weighting the individuals with validation $F_1^{\text{disco}}$ scores in hopes of achieving high performance in all aspects. Note that weighting individuals does not hurt our theoretical analysis and algorithm, because weighting is equivalent to duplicating individuals (since all the weights are rational numbers) and can be implemented by modifying hit counts without actual duplication.

\begin{table*}[t]
\centering
\resizebox{\linewidth}{!}{
\begin{tabular}{|r l | l l l | l l l | l l l|}
\hline
& & \multicolumn{3}{c|}{NEGRA} & \multicolumn{3}{c|}{TIGER} & \multicolumn{3}{c|}{LASSY}\\
\multicolumn{2}{|c|}{Method$^{(\#\text{ preterminal symbols})}$} & $F_1^{\text{overall}}$ & $F_1^{\text{cont}}$ & $F_1^{\text{disco}}$ & $F_1^{\text{overall}}$ & $F_1^{\text{cont}}$ & $F_1^{\text{disco}}$ & $F_1^{\text{overall}}$ & $F_1^{\text{cont}}$ & $F_1^{\text{disco}}$\\
\hline\hline
\multicolumn{2}{|l|}{Baselines (four runs each)$^\dag$}&&&&&&&&&\rule{0pt}{2.3ex}\\
1& Left branching & $\ \ 7.8$ & \ \ -- & $\ \ 0.0$ & $\ \ 7.9$ & \ \ -- & $\ \ 0.0$ & $\ \ 7.2$ & \ \ -- & $\ \ 0.0$\\
2& Right branching & $12.9$ & \ \ -- & $\ \ 0.0$ & $14.5$ & \ \ -- & $\ \ 0.0$ & $24.1$ & \ \ -- & $\ \ 0.0$\\
3& N-PCFG$^{(45)}$ & $40.8_{\pm0.5}$ & \ \ -- & $\ \ 0.0$ & $39.5_{\pm0.4}$ & \ \ -- & $\ \ 0.0$ & $40.1_{\pm3.9}$ & \ \ -- & $\ \ 0.0$\\
4& C-PCFG$^{(45)}$ & $39.1_{\pm1.9}$ & \ \ -- & $\ \ 0.0$ & $38.8_{\pm1.3}$ & \ \ -- & $\ \ 0.0$ & $37.9_{\pm3.4}$ & \ \ -- & $\ \ 0.0$\\
5& TN-PCFG$^{(4500)}$ & $45.4_{\pm0.5}$ & \ \ -- & $\ \ 0.0$ & $44.7_{\pm0.6}$ & \ \ -- & $\ \ 0.0$ & $44.3_{\pm6.4}$ & \ \ -- & $\ \ 0.0$\\
6& N-LCFRS$^{(45)}$ & $33.7_{\pm2.8}$ & \ \ -- & $\ \ 2.0_{\pm0.8}$ & $32.7_{\pm1.8}$ & \ \ -- & $\ \ 1.2_{\pm0.8}$ & $36.9_{\pm1.5}$ & \ \ -- & $\ \ 0.9_{\pm0.8}$\\
7& C-LCFRS$^{(45)}$ & $36.7_{\pm1.5}$ & \ \ -- & $\ \ 2.7_{\pm1.4}$ & $35.2_{\pm1.2}$ & \ \ -- & $\ \ 1.7_{\pm1.1}$ & $36.9_{\pm3.7}$ & \ \ -- & $\ \ 2.2_{\pm1.0}$\\
8& TN-LCFRS$^{(4500)}$ & $46.1_{\pm1.1}$ & \ \ -- & $\ \ 8.0_{\pm1.1}$ & $45.4_{\pm0.9}$ & \ \ -- & $\ \ 6.1_{\pm0.8}$ & $45.6_{\pm2.3}$ & \ \ -- & $\ \ 8.9_{\pm1.5}$\\
\hline
\multicolumn{2}{|l|}{Individuals: TN-LCFRS$^{(4500)}$}&&&&&&&&&\rule{0pt}{2.3ex}\\
9& Five runs & $46.4_{\pm0.5}$ & $49.8_{\pm1.3}$ & $\ \ 6.0_{\pm4.0}$ & $45.8_{\pm1.3}$ & $49.9_{\pm1.1}$ & $\ \ 4.0_{\pm3.2}$ & $46.7_{\pm2.0}$ & $50.9_{\pm1.7}$ & $\ \ 6.2_{\pm1.9}$\\
10& $F_1^{\text{overall}}$-best run & $\underline{46.9}$ & $50.2$ & $\ \ 1.3$ & $\underline{47.2}$ & $51.1$ & $\ \ 5.9$ & $\underline{48.2}$ & $52.4$ & $\ \ 5.8$\\
11& $F_1^{\text{cont}}$-best run & $46.7$ & $\underline{51.3}$ & $\ \ 7.3$ & $47.2$ & $\underline{51.1}$ & $\ \ 5.9$ & $48.2$ & $\underline{52.4}$ & $\ \ 5.8$\\
12& $F_1^{\text{disco}}$-best run & $46.0$ & $48.3$ & $\underline{10.4}$ & $45.4$ & $48.8$ & $\ \ \underline{6.6}$ & $48.0$ & $52.1$ & $\ \ \underline{8.6}$\\
\hline
13& Binary ensemble & ${47.6}^{*}$ & ${50.1}$  & $\ \ {9.9}$ & ${47.8}^{**}$ & ${51.5}^{}$  & $\ \ {6.5}$ & ${50.9}^{**}$ & ${54.6}^{**}$  & $\ \ {9.7}^{**}$\\
14& Non-binary ensemble & $\textbf{49.1}^{**}$ & $\textbf{51.5}$  & $\textbf{10.6}$ & $\textbf{48.7}^{**}$ & $\textbf{52.4}^{**}$  & $\ \ \textbf{6.6}$ & $\textbf{51.4}^{**}$ & $\textbf{55.0}^{**}$  & $\textbf{10.2}^*$\\
\hline
\end{tabular}
}\vspace{-.1cm}
\caption{Main results. $^\dag$Quoted from \citet{yang-etal-2023-unsupervised}. $^*$$p$-value $<0.05$ in an Improved Nonrandomized Sign test~\citep{INStest} against the best ensemble individual in each metric, indicated by \underline{underline}.  $^{**}$$p$-value $<0.01$.}
\label{tab:main_results}
\end{table*}

\textbf{Baselines.} Our ensemble individual, TN-LCFRS~\citep{yang-etal-2023-unsupervised}, is naturally included as a baseline. In addition, we consider different variants of continuous PCFG parsers and discontinuous LCFRS parsers, based on vanilla neural networks, compound prior~\citep{kim-etal-2019-compound}, and tensor decomposition-based neural networks~\citep{yang-etal-2021-pcfgs}, denoted by $[$N$|$C$|$TN$]$-$[$PCFG$|$LCFRS$]$.

\subsection{Results and Analyses}
\label{sec:results}

\textbf{Main Results.} Tabel~\ref{tab:main_results} presents the main results on three datasets. We replicated TN-LCFRS with five runs as our ensemble individuals; our results are similar to \citet{yang-etal-2023-unsupervised}, showing the success of our replication (Rows~9~vs.~8).

In our study, we observe TN-LCFRS behaves inconsistently in different runs: some runs are good at continuous constituents~(Row~11), while others are good at discontinuous ones~(Row~12); both may disagree with the best run according to $F_1^\text{overall}$~(Row~10).

Our ensemble approach (Rows~14) achieves $F_1^{\text{cont}}$ and $F_1^{\text{disco}}$  scores comparable to, or higher than, all individuals. This eventually leads to a much higher $F_1^{\text{overall}}$ score, with a $p$-value $<0.01$ in an Improved Nonrandomized Sign (INS) test~\citep{INStest} with the binomial comparison over sentence-level performance.\footnote{We chose INS test because it can properly handle a large number of ties (neutral cases), when they do not necessarily suggest the equivalency of the models but the inadequacy of test samples (e.g., no discontinuous constituents in a sentence).} Results are consistent on all three datasets. 

\begin{table}[t]
\centering
\resizebox{\linewidth}{!}{
\begin{tabular}{|l|*{3}{r r|}}
\hline
&
\multicolumn{2}{c|}{Overall} &
\multicolumn{2}{c|}{Cont} &
\multicolumn{2}{c|}{Disco}\\
Model & \multicolumn{1}{c}{TP} & \multicolumn{1}{c|}{FP} & \multicolumn{1}{c}{TP} & \multicolumn{1}{c|}{FP} & \multicolumn{1}{c}{TP} & \multicolumn{1}{c|}{FP}\\
  \hline
Binary ensemble & $6392$ & $9318$ & $6290$ & $7934$ & $102$ & $1384$\\ 
Non-binary ensemble & $6321$ & $8858$ & $6221$ & $7605$ & $100$ & $1253$\\
  \hline
Difference & $71$ & $460$ & $69$ & $329$ & $2$ & $131$\\
  \hline
\end{tabular}
}\vspace{-.1cm}
\caption{Number of true positive (TP) and false positive (FP) constituents on LASSY.}
\label{table:binary_vs_nonbinary}
\end{table}

\textbf{Binary vs.~  Non-Binary Ensemble.} To the best of our knowledge, we are the first to address non-binary unsupervised constituency parsing. A natural question is whether and how non-binarity in the output improves the performance, given that all individuals are binary. To this end, we conducted an experiment by restricting the output to binary trees.\footnote{For binary outputs, we may still perform the meet-in-the-middle search for non-binary trees, with the hit count as the objective, and binarize them by post-processing. Note that Theorem~\ref{thm:lowerbound} for pruning does not hold in this case, but we may still safely ignore zero-hit constituents, making the algorithm affordable (although it is not as efficient as the non-binary setting where Theorem~\ref{thm:lowerbound} can be applied).} We compare the results with the non-binary setting in Rows~13--14, Table~\ref{tab:main_results}, showing the consistent superiority of our non-binary ensemble in all metrics and datasets. To further understand the effect of non-binarity, we report in Table~\ref{table:binary_vs_nonbinary} true positive and false positive counts on the LASSY dataset.\footnote{We chose LASSY as the testbed for all our analyses due to the limit of space and time, because it is relatively more stable in terms of $F_1^\text{disco}$ than other datasets.} Results suggest that the non-binary setting largely eliminates false positive constituents, as the model may opt to predict fewer constituents than that in the binary setting.

\begin{figure}[t]
\begin{center}
\includegraphics[width=1\linewidth]{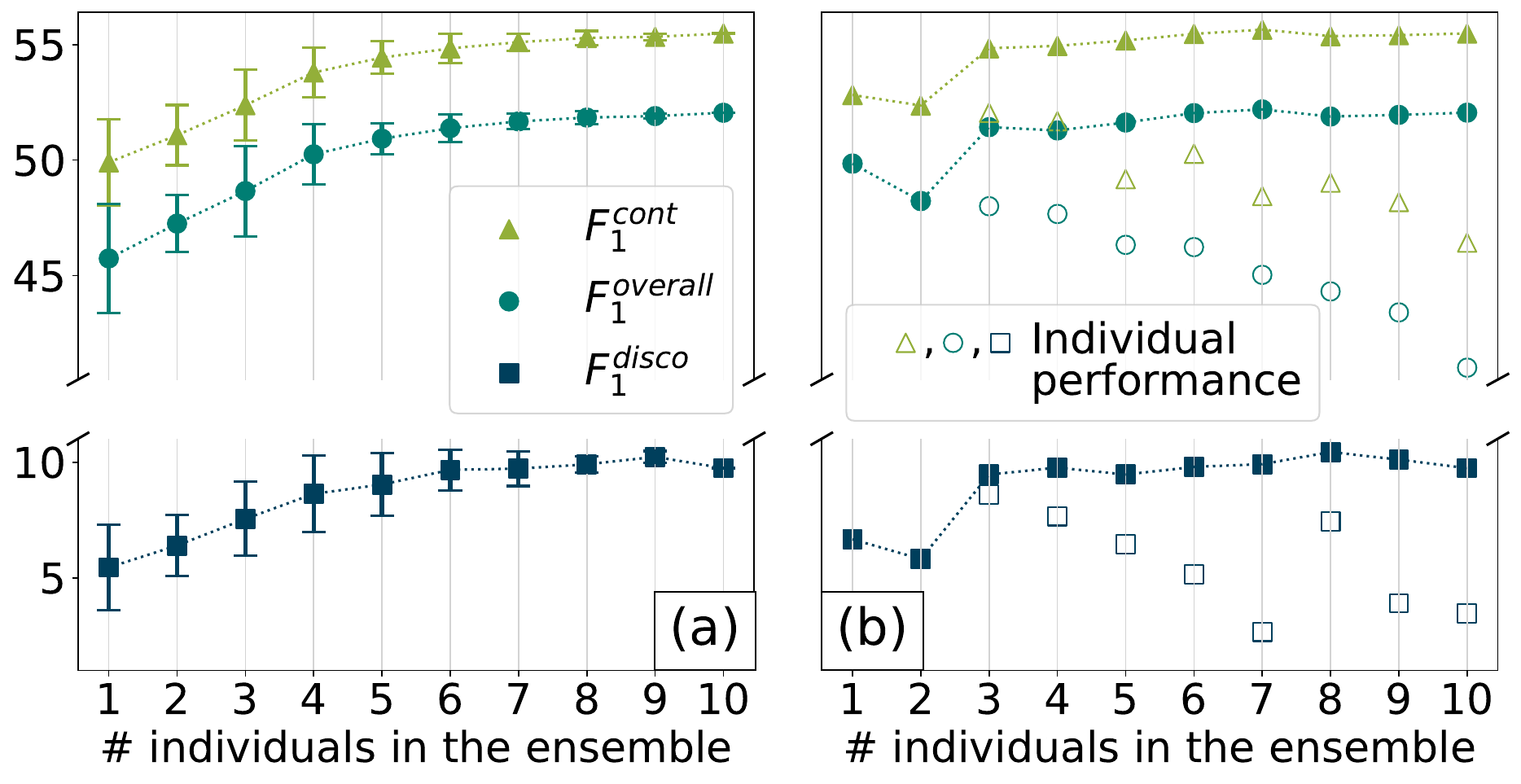}
\end{center}\vspace{-10pt}
\caption{Effect of the number of ensemble individuals on LASSY. (a) Averaged over $30$ trials with error bars indicating standard derivations. (b) Best-to-worst incremental ensemble.}
\label{fig:analysis_incremental}
\end{figure}

\textbf{Number of Ensemble Individuals.}
We varied the number of individuals in the ensemble to analyze its effect. In Figure~\ref{fig:analysis_incremental}a, we picked a random subset of individuals from a pool of $10$ and averaged the results over $30$ trials. Overall, more individuals lead to higher performance and lower variance in all $F_1$ scores, although the performance may be saturated if there are already a large number of individuals.

In addition, we present an analysis of the best-to-worst incremental ensembles in Figure~\ref{fig:analysis_incremental}b, where we add individuals to the ensemble from the best to worst based on $F_1^\text{overall}$. As seen, adding weak individuals to the ensemble does not hurt, if not help, the performance in our experiments, demonstrating that our ensemble approach is robust to the quality of individuals.

\begin{figure}[t]
\begin{center}\vspace{-.2cm}
\includegraphics[width=1\linewidth]{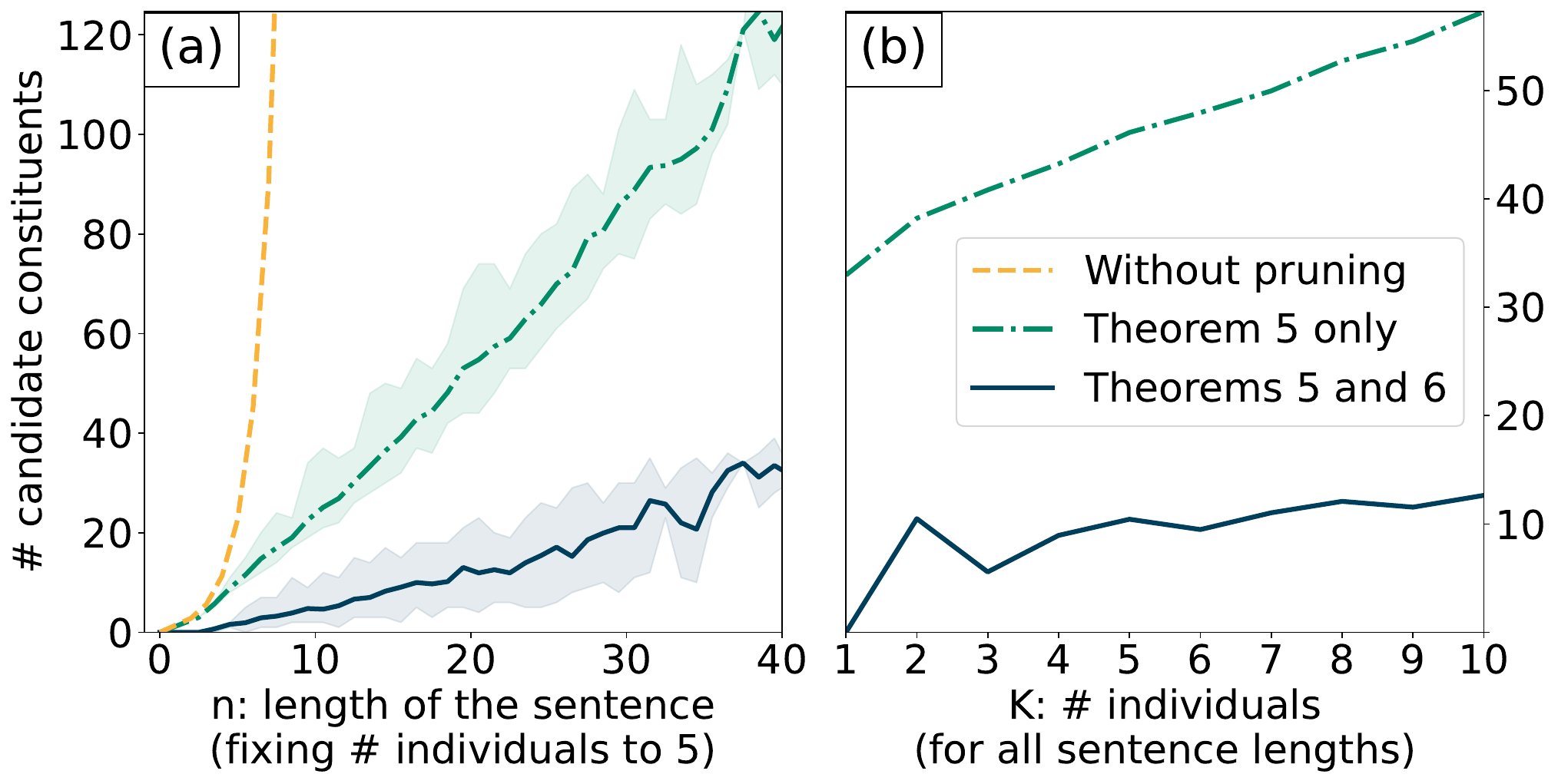}
\end{center}\vspace{-10pt}
\caption{Effectiveness of pruning on LASSY for (a) different sentence lengths, and (b) different numbers of ensemble individuals. Note that the dashed orange line does not fit the range of $y$-axis in (b).}
\label{fig:complexity_pruning}
\end{figure}

\begin{figure}[t]
\begin{center}
\includegraphics[width=1\linewidth]{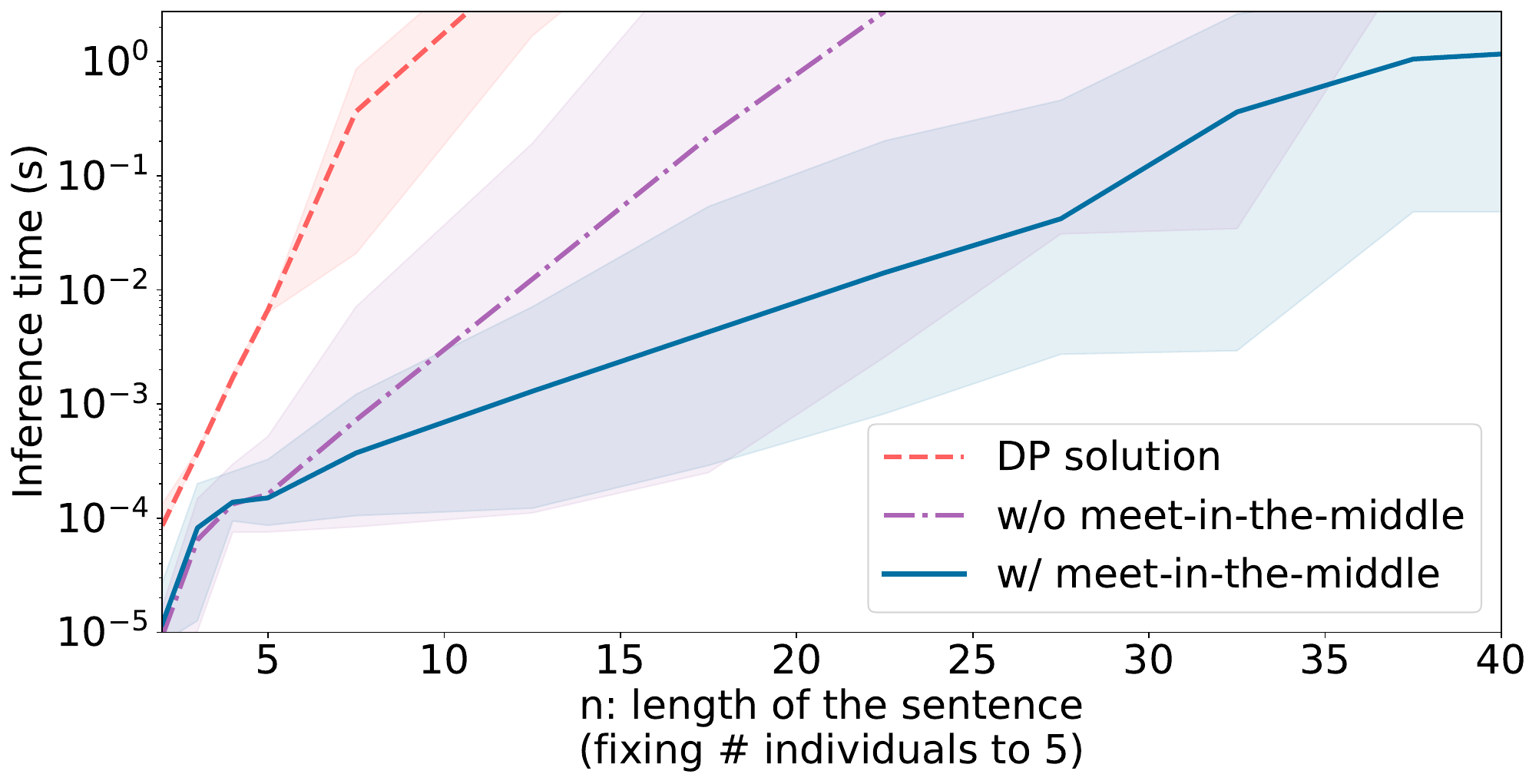}
\end{center}\vspace{-10pt}
\caption{Wall clock run time of tree-averaging algorithms on LASSY for different sentence lengths, using an Intel(R) Core(TM) i9-9940X (@3.30GHz) CPU.}
\label{fig:run_time}
\end{figure}

\textbf{Efficiency Analysis.} We analyze the effect of our pruning mechanism and the meet-in-the-middle algorithm. Pruning serves as preprocessing for our search, and we show the number of remaining candidate constituents in Figure~\ref{fig:complexity_pruning}. The empirical results confirm Theorem~\ref{thm:lowerbound} bounding the number of candidates linear in both sentence length and the number of individuals, as well as Theorem~\ref{thm:upperbound} further bounding it by a constant with respect to the number of individuals.

We further analyze wall clock run time in Figure~\ref{fig:run_time}, as the actual execution time may be different from algorithmic complexity. In \S\ref{Apndx:proof:boundeddiscontinuous}, we provide a dynamic programming (DP) algorithm with time complexity of  $\mathcal O(n^9)$ for a length-$n$ sentence. As shown in the figure, the DP fails to serve as a practical algorithm due to the high-order polynomial. By contrast, our exponential-time search algorithm, even without the meet-in-the-middle technique, is able to run in a reasonable time for many samples (especially short ones) because of the strong pruning mechanism. Our meet-in-the-middle technique further speeds up the search, making our algorithm efficient and faster than DP for all the data samples.

It is emphasized that this experiment also empirically verifies the correctness of all our algorithms, as they perform exact inference and we have obtained exact results by using different algorithms. 

\textbf{Additional Results.} We provide additional results in the appendix. \ref{apndx:per_label_analysis}: Performance by constituency types; and \ref{appndx:case}: Case study.

\section{Related Work}

Constituency parsing carries a long history in natural language processing research~\citep{charniak-2000-maximum, klein2005unsupervised, kallmeyer-maier-2010-data,li-etal-2019-imitation}. Different setups of the task have been introduced and explored, including supervised and unsupervised, continuous and discontinuous constituency parsing~\citep{shen2017neural, shen-etal-2018-straight, shen2018ordered, corro-2020-span, DiscontinuousConstituencyParsingWithPointerNetworks, fernandez-gonzalez-gomez-rodriguez-2021-reducing, FERNANDEZGONZALEZ202343, chen-komachi-2023-discontinuous, yang-etal-2023-unsupervised, yang-tu-2023-dont}. In the unsupervised setup, researchers typically define a joint distribution over the parse structure and an observable variable, e.g., the sentence itself, and maximize the observable variable's likelihood through marginalization~\citep{kim-etal-2019-compound, kim-etal-2019-unsupervised}. To the best of our knowledge, previous unsupervised parsing studies are all restricted to binary structures to squeeze the marginalization space, and we are the first to address non-binary unsupervised parsing and show non-binarity is beneficial to parsing performance.

Ensemble methods strategically combine multiple models to improve performance, rooted in the bagging concept where different data portions are used to train multiple models ~\citep{breiman1996bagging,hastie2009boosting}. To build an ensemble, straightforward methods include averaging and voting~\citep{breiman1996bagging,breiman1996heuristics}. 
For outputs with internal structures, minimum Bayes risk decoding~\citep[MBR;][]{MBRbook} can be used to build an ensemble, where the vote is the negative risk in MBR. However, existing MBR approaches are mostly \textit{selective}, where the output is selected from a candidate set~\citep{kumar-byrne-2004-minimum,titov-henderson-2006-loss,shi-etal-2022-natural}.
\citet{petrov-klein-2007-improved} formulate MBR for supervised constituency parsing and propose to search for the global optimum decoding when the risk allows for dynamic programming. \citet{smith-smith-2007-probabilistic} extend the idea to non-projective dependency parsing. In our previous work~\cite{shayegh2023ensemble}, we 
formally introduce \textit{generative} MBR and develop an algorithm that searches for a binary continuous tree. This paper extends our previous work and searches in the space of discontinuous constituency trees, leading to significant algorithmic design and theoretical analysis.

\section{Conclusion}

In this work, we address ensemble-based unsupervised discontinuous constituency parsing by tree averaging, where we provide comprehensive complexity analysis and develop an efficient search algorithm to obtain the average tree. Our experiments on Dutch and German demonstrate the effectiveness of our ensemble method. To the best of our knowledge, we are also the first to address, and show the importance of, non-binary structures in unsupervised constituency parsing. 

\section{Limitations}
Our work demonstrates both theoretical depth and empirical effectiveness, but may also have limitations. 

First, our work is focused on syntactic parsing, and we have provided a series of theoretical analyses and algorithmic designs for averaging constituency trees under different setups. Following the trajectory, our ensemble approach may be extended to other structures beyond parsing. We are happy to explore this direction as future work. For example, a concurrent study of ours develops algorithms for ensemble-based text generation~\cite{wen2024ebbs}.

Second, our model is only tested on Dutch and German datasets. This is partially because of the established setups in previous work~\citep{yang-etal-2023-unsupervised} and the lack of annotated treebanks. Notice that English is usually excluded from the study of unsupervised discontinuous parsing, because English discontinuous structures are too rare for any model to discover. A potential future direction is multilingual linguistic structure discovery, perhaps, with ensembles of different languages. 

Third, our theoretical analysis leaves an open problem about averaging binary trees. However, pointing to open problems is usually considered a contribution (instead of a weakness) in theoretical computer science. Our theoretical analysis is also crucial to the understanding of our algorithms, because we now know that our proposed method works at the level of Open Problem~\ref{openproblem:binarydisco} and can be easily extended to the level of Problem~\ref{problem:nonbinarydisco} following the technique proposed in Appendix~\ref{Apndx:proof:nonbinaryboundeddiscontinuous}. 

\section*{Acknowledgments}
We would like to thank all reviewers and chairs for their valuable and constructive comments. The research is supported in part by the Natural Sciences and Engineering Research Council of Canada (NSERC), the Amii Fellow Program, the Canada CIFAR AI Chair Program, an Alberta Innovates Program, a donation from DeepMind, and the Digital Research Alliance of Canada (alliancecan.ca). We also thank Yara Kamkar for providing advice on the algorithms.

\newpage

\bibliography{custom}

\appendix

\newpage
\onecolumn
\section{Proofs}
\label{Apndx:proofs}

\subsection{Proof of Theorem~\ref{thm:boundeddiscontinuous}}
\label{Apndx:proof:boundeddiscontinuous}

\problemboundeddiscontinuous*

\theoremboundeddiscontinuous*

\begin{proof}
Following our previous work~\citep{shayegh2023ensemble}, we may design a DP table with two axes being the start and end of every component of a constituent, thus $2F$-axes for fan-out~$F$. For outputs being non-binary, our DP algorithm requires one additional axis to indicate the number of nodes in a subtree.\footnote{If the output is restricted to binary trees, we do not need to track the number of nodes in a subtree, because it is always $2n-1$ for $n$ leaves.}

We define a DP variable
$H(c,\tau)$ as the best total hit count for a $\tau$-node  constituency substree over a constituent $c$, which may be discontinuous. 
A  constituent with fan-out $f$ has $f$ components, each being a span of consecutive words. Therefore, $c$ can be represented by $2F$ numbers indicating the beginnings and ends of the components, as the fan-out is bounded by $F$. For $\tau$, it is upper-bounded by $2n-1$ for a length-$n$ sentence. Overall, the DP table has a size of $\mathcal{O} (n^{2F+1})$. Specially, we further define  $H(c, 0)=0$ for every $c$.

For initialization, we consider every single-word constituent $c$ and set $H(c, 1) = \mathpzc h(c)$, where $\mathpzc h(c)$ is the hit count of $c$ in $T_1, \cdots, T_K$. We also set $H(c, i)= -\infty$ for every $i>1$.

For recursion, we divide a constituent $c$ into smaller sub-constituents based on alternation points $\bm j=(0, j_1, j_2, \cdots, j_{2F-1}, |c|)$, where $|c|$ is the number of words in $c$, shown in Figure~\ref{fig:theorem1}. We further join sub-constituents based on the parity (even or odd) of their indexes. 
\begin{align}
    o_{\bm j} &= \bigoplus_{i=1,3,\cdots, 2F-1} c[j_{i-1}:j_i]\\
    e_{\bm j} &= \bigoplus_{i=2,4,\cdots, 2F} c[j_{i-1}:j_i]
\end{align}
where $c[b:e]$ denotes a sub-constituent containing $b$th to $e$th words in $c$, and $\oplus$ symbol is the joint of sub-constituents. We also distribute the capacity of the number of nodes into $e_{\bm j}$ and $o_{\bm j}$ branches by setting $s$ as the $o_{\bm j}$'s share and assigning the rest to $e_{\bm j}$.

To find the best alternation points and shares, we iterate over all possible values for them:
\begin{align}
    \bm j^{*}_\tau, s^*_\tau = 
    \operatorname*{argmax}_{\bm j, \;\; 0 \leq \mathpzc s \leq \tau}\;
    \Big[H(o_{\bm j}, s) + H(e_{\bm j}, \tau-s)\Big]
\end{align}
where $\bm j=(0, j_1, \cdots, j_{2F-1}, |c|)$ must satisfy $j_{i} \leq j_{i+1}\le |c|$, $j_1\ne |c|$, and if $j_{i}\ne |c|$, then $j_{i} < j_{i+1}$. We discard the cases that $o_{\bm j}$ or $e_{\bm j}$ have a fan-out greater than $F$, because they are guaranteed not to appear in the output (Theorem~\ref{thm:lowerbound}, whose proof does not rely on this theorem).

Our tree-building search processing first assumes a node is binary and then decides whether to join its parent and children to achieve non-binarity. Therefore, we calculate the best hit count when $c$ itself is excluded, denoted by $H_\text{excl}$, or included, denoted by~$H_\text{incl}$. We have
\begin{align}
    H_\text{excl}(c, \tau) &= H(o_{\bm j^{*}_\tau}, s^*_\tau) + H(e_{\bm j^{*}_\tau}, \tau-s^*_\tau)\\
    H_\text{incl}(c, \tau) &=  H_\text{excl}(c, \tau-1) + \mathpzc h(c)
\end{align}
Finally, the recursion is 
\begin{align}
    H(c,\tau) &= \max\{H_\text{excl}(c, \tau), H_\text{incl}(c, \tau)\}
\end{align}

The best score of $\sum_{k=1}^K F_1(T, T_k)$ for the sentence $S$ is
\begin{align}
\operatorname*{argmax}_{n < \tau < 2n} \frac{H(S, \tau)}{\tau+2n-1}
    \label{eq:DPscore}
\end{align}
according to Eqn.~\eqref{eq:SimpliedAvg}. Problem~\ref{problem:boundeddiscontinuous}, as a decision problem, can then be solved trivially by checking whether the score in \eqref{eq:DPscore} is greater than or equal to~$z$. 

Overall, the time complexity for a recursion step is $\mathcal{O}(n^{2F})$ to find $\bm j^{*}_\tau$ and  $s^*_\tau$. With an $\mathcal{O}(n^{2F+1})$-sized DP table, the complexity of the entire DP algorithm is $\mathcal O(n^{4F+1})$, which is polynomial. Therefore,  Problem~\ref{problem:boundeddiscontinuous} belongs to~\textbf{P}.
\end{proof}
The above proof concerns the decision problem (whether the score reaches or exceeds a threshold). 
To search for the average constituency tree, we may backtrack the best corresponding constituency subtrees during the recursion.

Notice that restricting output to be binary tree simplifies the algorithm by always setting $s_\tau^* = 2|o_{\bm{j}}|-1$, $H(c, \tau)=H_{\text{incl}}(c, \tau)$, and only considering the computation of $H(c, \tau)$ if $\tau = 2|c|-1$. It reduces the complexity of a recursion step to $\mathcal{O}(n^{2F-1})$ and that of the DP-table size to $\mathcal{O}(n^{2F})$, resulting in an overall complexity of $\mathcal{O}(n^{4F-1})$.

The soundness of the DP can be cross-validated by our meet-in-the-middle search (\S\ref{sec:our_search_algo} and~\S\ref{apndx:meetinthemiddle}). As both are exact algorithms, they should output exactly the same results. This is what we observed in our experiments, providing strong empirical evidence that both of our algorithms are sound.  

\begin{figure}[t]
\begin{center}
\includegraphics[width=.6\linewidth]{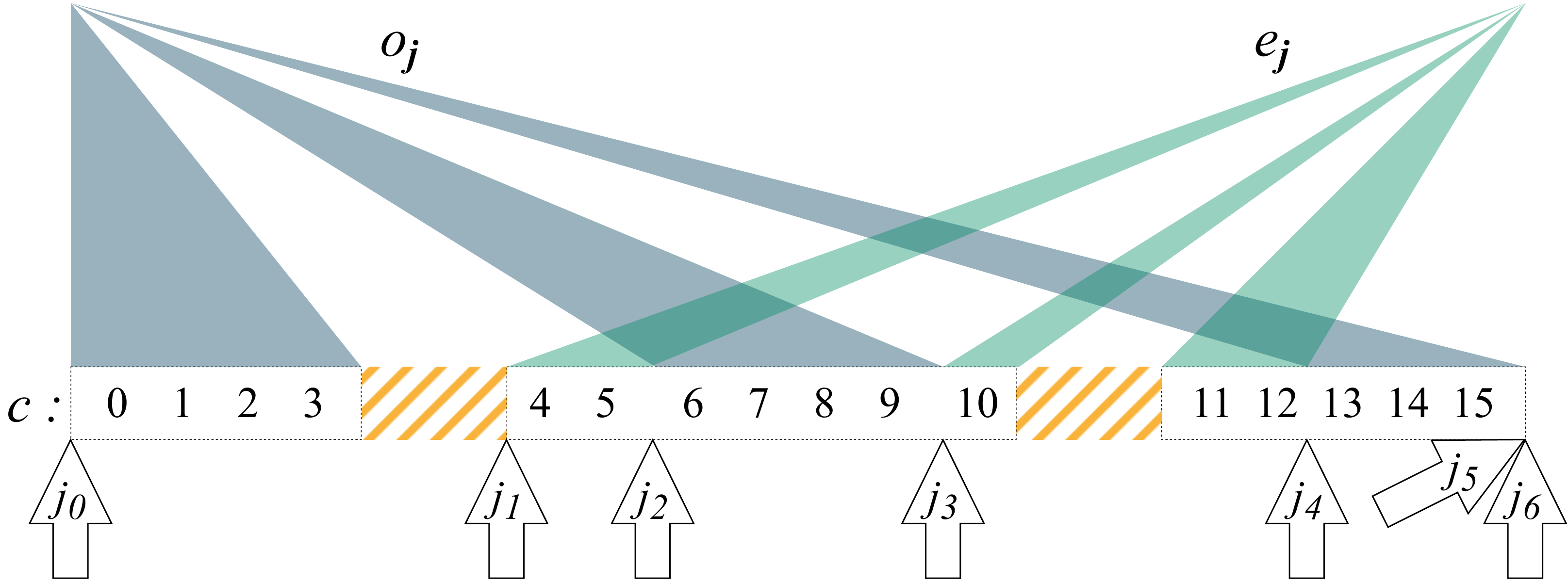}
\end{center}\vspace{-10pt}
\caption{An example of the recursion step. The yellow hachures indicate the discontinuity in the constituent $c$.}
\label{fig:theorem1}\vspace{5pt}
\end{figure}

\subsection{Proof of Theorem~\ref{thm:nonbinaryboundeddiscontinuous}}
\label{Apndx:proof:nonbinaryboundeddiscontinuous}

\problemnonbinaryboundeddiscontinuous*

\theoremnonbinaryboundeddiscontinuous*

\begin{proof}
    In \S\ref{Apndx:proof:boundeddiscontinuous}, we provide a DP algorithm and by backtracking we can easily obtain, for any $\tau$, the constituency tree $T$ that maximizes $\sum_{c\in C(T)} \mathpzc h(c)$ such that $|C(T)|=\tau$ (i.e., the tree having $\tau$ nodes). The DP algorithm, in fact, works for any scoring function $\mathpzc h$ over constituents. Therefore, we have a polynomial-time solver for the following problem:

\bigskip
    \begin{adjustwidth}{\parindent}{}
    \begin{it}
        \textbf{Fixed-Size Maximization.} Given a scoring function $\mathpzc h$ over constituents of a length-$n$ sentence and a natural number $\tau$ such that $n < \tau < 2n$, what is the tree $T$ that maximizes $\sum_{c\in C(T)} \mathpzc h(c)$ with $|C(T)|=\tau$?
    \end{it}
    \end{adjustwidth}
\bigskip

For Problem~\ref{problem:nonbinaryboundeddiscontinuous}, we may enumerate all possible values of $\tau$, i.e., $n < \tau < 2n$. Given a fixed $\tau$, we have
\begin{align}
     T^{(\tau)} &= \operatorname*{argmax}_{T: |T|=\tau} \sum_{k=1}^K F_1(T, T_k)\label{eq:maximf1fixedtau}\\
    &= \operatorname*{argmax}_{T: |T|=\tau} \sum_{k=1}^K \frac{|C(T)\cap C(T_k)|}{|C(T)|+|C(T_k)|}\\
    &= \operatorname*{argmax}_{T: |T|=\tau} \hspace{-5pt} \sum_{c\in C(T)} \underbrace{\sum_{k=1}^K \frac{\mathds{1}[c \in C(T_k)]}{\tau+|C(T_k)|}}_{\hat{\mathpzc h}(c)}\\
    &=\operatorname*{argmax}_{T: |T|=\tau} \sum_{c\in C(T)}\hat{\mathpzc h}(c)\label{eq:weightedhitcount}
\end{align}
In Eqn.~\eqref{eq:weightedhitcount}, we define a generalized scoring function $\hat{\mathpzc h}$, which can be thought of as a weighted hit count.
In other words, the overall-$F_1$ maximization in Eqn.~\eqref{eq:maximf1fixedtau} can be reduced to the above fixed-size maximization problem where the number of nodes is given, which can be solved in polynomial time. 
We may repeatedly solve the problem for $(n-2)$-many values of $\tau$ and find the best answer among them:
\begin{align}
    \tau^{*} &= \operatorname*{argmax}_{n < \tau < {2n}} \quad \sum_{k=1}^K F_1(T^{(\tau)}, T_k)\\
    T^* &= T^{(\tau^*)}
\end{align}
This does not push the complexity beyond polynomial. Finally, we can answer Problem~\ref{problem:nonbinaryboundeddiscontinuous} by checking whether $\sum_{k=1}^K F_1(T^*, T_k) \geq z$.
\end{proof}

\subsection{Proof of Theorem~\ref{thm:nonbinarydisco}}
\label{Apndx:proof:nonbinarydisco}

\problemnonbinarydisco*

\theoremnonbinarydisco*

\begin{proof} To show a problem is \textbf{NP}-complete, we need to first show it is an \textbf{NP} problem, i.e., polynomial-time solvable with a non-deterministic Turing machine. Then, we need to show its completeness, i.e., any \textbf{NP} problem, or a known \textbf{NP}-complete problem, can be reduced to this problem in polynomial time.

[Being \textbf{NP}] Being non-deterministic polynomial-time solvable is equivalent to the ability to be verified with a \textit{certificate} in polynomial time~\citep{arora2009computational}. In our case, verifying the score of any candidate tree (which may serve as the certificate) can be done in polynomial time, proving the \textbf{NP} part.

[Being complete]
To show the completeness, it suffices to reduce a known \textbf{NP}-complete problem to the problem at hand. In particular, we would reduce the max clique problem, known to be \textbf{NP}-complete~\citep{arora2009computational}, to our problem.

\bigskip
\begin{adjustwidth}{\parindent}{}
\begin{it}
    \textbf{Max Clique.}
    \label{problem:maxclique} Consider a number $z$ and a graph~$G$. Is there a clique in $G$ having at least $z$ vertices?
\end{it}
\end{adjustwidth}
\bigskip

For reduction, we use a solver for Problem~\ref{problem:nonbinarydisco} to solve the max clique problem with $G=\langle V, E\rangle$ and $z$ being given. 

Without loss of generality, we may assume that  $G$ does not contain any vertex that is connected to all other vertices for the rest of this proof. This is because such a vertex is guaranteed to appear in any maximal clique and can be pruned. Then the original problem is equivalent to determining whether the pruned graph has a clique with $(z-a)$-many vertices, where $a$ is the number of pruned vertices. Notice that such pruning can be accomplished in polynomial time, thus not changing the complexity category.

\begin{figure}[t]
\begin{center}
\includegraphics[width=.5\linewidth]{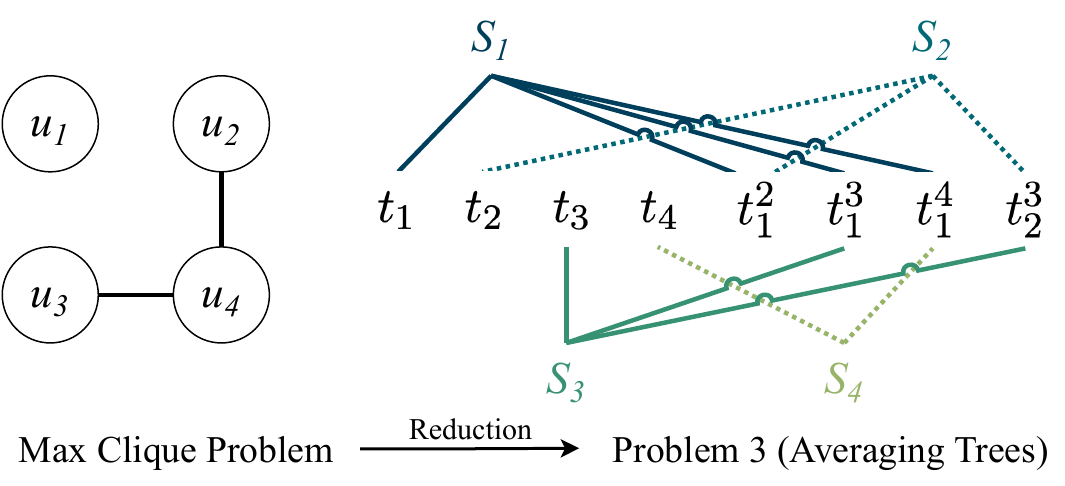}
\end{center}\vspace{-10pt}
\caption{An example of reduction.}\vspace{10pt}
\label{fig:theorem3}
\end{figure}

We construct an instantiation of Problem~\ref{problem:nonbinarydisco} as follows.
For every $u_i \in {V}$, construct a set $S_i$ initialized by a special symbol $t_i$. For every disconnected vertex pair $(u_i, u_j)$ such that $i< j$, place a special symbol $t_i^j$ in both $S_i$ and $S_j$, shown in Figure~\ref{fig:theorem3}. We can view each $t_i$ and $t_i^j$ as words in a sentence. Each $S_i$ represents a possible constituent containing the words corresponding to its elements.

For each $S_i$, construct a constituency tree $T_i$ by all the single words, the whole-sentence constituent, and $S_i$.
Note that $S_i$ cannot be a single word because there is no vertex in $G$ connected to all other vertices. Moreover, $S_i$ cannot be the whole-sentence constituent either, because $S_i$ does not contain the word $t_{i'}$ for $i'\ne i$. Therefore, $|C(T_i)|=n+2$ where $n$ is the number of words.

We next show that Problem~\ref{problem:nonbinarydisco} given $\{T_i\}_{i:u_i \in {V}}$ and the value $\tilde z = \frac{2{z} + 2|{V}|(n+1)}{{z}+2n+3}$ is equivalent to the max clique problem given the graph $G$ and the number ${z}$.

For every $u_i, u_j \in {V}$ with $i<j$, $u_i$ and $u_j$ are connected if and only if $S_i\cap S_j = \emptyset$; otherwise $t_i^j$ is in both $S_i$ and $S_j$. In addition, $S_i \not\subseteq S_j$ for $i\ne j$ because there is a special symbol $t_i$ in each $S_i$. In other words, $S_i$ and $S_j$ are compatible constituents if and only if $u_i$ and $u_j$ are connected in $G$. As a result, every constituency tree---constructed by single words, the whole-sentence constituent, and a subset of $\{S_i\}_{i:u_i \in {V}}$---corresponds to a clique in~$G$. We refer to such a tree as a \textit{clique tree}.

We can further find the correspondence of the scores between Problem~\ref{problem:nonbinarydisco} and the max clique problem. For every clique tree $T$, we have
\begin{align}
\sum_{i:\,u_i \in {V}} F_1({T}, {T}_i) &= \sum_{i:\,u_i \in {V}} \frac{2 (\mathds{1}[C({T}_i) \subseteq C({T})]+n+1)}{|C({T})|+n+2}\\
&= \frac{2(|C(T)|-n-1) + 2|{V}|(n+1)}{|C({T})|+n+2}
\end{align}
where $|C({T})|-n-1$ is the same as the number of vertices in the corresponding clique. This shows a clique tree ${T}$ with a score of $\tilde z = \frac{2{z} + 2|{V}|(n+1)}{{z}+2n+3}$ corresponds to a clique with ${z}$ vertices.

Now, let us consider using a solver for Problem~\ref{problem:nonbinarydisco} to solve the max clique problem. This is discussed by cases.

[Case 1] If the answer to Problem~\ref{problem:nonbinarydisco} is ``no,'' it means that there does not exist a tree (either a clique tree or a general tree) $T$ satisfying $\sum_{i: u_i \in {V}} F_1({T}, {T}_i) \geq \tilde z$. Thus, there does not exist a clique $Q$ satisfying $|Q|\geq z$.

[Case 2] If the answer to Problem~\ref{problem:nonbinarydisco} is ``yes,'' there must exist a tree $T$ satisfying the condition $\sum_{i: u_i \in {V}} F_1({T}, {T}_i) \geq \tilde z$. Notice that such a tree is not guaranteed to be a clique tree, but in this case, we can show there must also exist a clique tree satisfying the condition. 

In general, a tree is a clique tree if and only if it does not include any zero-hit constituents (zero hit means not appearing in any $T_i$). This is because, other than single words and the whole-sentence constituent, a clique tree selects a subset of $\{S_i\}_{i:u_i \in {V}}$, whereas each $T_i$ only selects one element of this set. Let $C_0(T)$ denote zero-hit constituents in $T$. We construct $T'$ by removing $C_0(T)$ from $T$.  We have
\begin{align}
    \tilde z &\leq \sum_{i: u_i \in {V}} F_1(T, T_i) 
    \label{eq:npproblemholds}\\
    &= \sum_{i: u_i \in {V}} \frac{2\sum_{c \in C(T)} \mathds{1}[c \in C(T_i)]}{|C(T)|+|C(T_i)|} \\
    &\leq \sum_{i: u_i \in {V}} \frac{2\sum_{c \in C(T)\backslash C_0(T)} \mathds{1}[c \in C(T_i)]}{|C(T)|-|C_0(T)|+|C(T_i)|}
    \label{eq:removingC0}\\
    &= \sum_{i: u_i \in {V}} \frac{2\sum_{c \in C(T')} \mathds{1}[c \in C(T_i)]}{|C(T')|+|C(T_i)|} \\
    &= \sum_{i: u_i \in {V}} F_1(T', T_i)
\end{align}
Here, Inequality~\eqref{eq:npproblemholds} is due to the assumption of Case 2 that the answer to Problem~\ref{problem:nonbinarydisco} is ``yes.'' Inequality~\eqref{eq:removingC0} is because $|C_0(T)|\ge0$ and $h(c)=0$ for $c\in C_0(T)$, potentially increasing the denominator while the numerator is the same.

Since $T'$ is a clique tree that satisfies $\sum_{i: u_i \in {V}} F_1(T', T_i)\ge \tilde z$, we find the solution to the max clique problem is also ``yes.''

The given reduction can be done in polynomial time, as it iterates over all vertices and vertex pairs. This concludes that Problem~\ref{problem:nonbinarydisco} belongs to \textbf{NP}-complete.
\end{proof}

\vspace{1pt}
\subsection{Proof of Lemma~\ref{lemma:NMWC}}
\label{Apndx:proof:lemmaNMWC}

\noindent\textbf{Lemma 1.} \begin{it}
    Given a graph $G$ as described in \S\ref{sec:our_search_algo}, the clique $Q^*$ maximizing $f(Q; \alpha_1, \alpha_2)$ corresponds to a constituency tree, if $\alpha_1 \leq K\alpha_2$ where $K$ is the number of individuals.
\end{it}
\begin{proof}
A selection of constituents corresponds to a constituency tree over a sentence if and only if (1) it includes all the single words and the whole-sentence constituent, referred to as \textit{trivial} constituents, and (2) all of its constituents are compatible with each other (already satisfied by the clique definition). 

Trivial constituents are compatible with any possible constituent, and thus their corresponding vertices are connected to all vertices in $G$. Moreover, the hit count of such a constituent equals $K$, the number of individual trees, because trivial constituents appear in all constituency trees.

We assume by way of contradiction that a trivial constituent does not appear in the constituency tree corresponding to $Q^*$. This is equivalent to having a vertex $u$---connected to every vertex in $G$ with $w(u)=K$, i.e., having a weight of $K$---not appearing in $Q^*$; that is,
\begin{align}
    f({Q}^* \cup \{u\}; \alpha_1, \alpha_2) &< f({Q}^*; \alpha_1, \alpha_2)
\end{align}
With the definition of $f$, we have
\begin{align}
    \frac{\sum_{v\in{Q}^*} w(v){+}w(u){+}\alpha_1}{|{Q}^*|+1+\alpha_2} &< \frac{\sum_{v\in{Q}^*} w(v) + \alpha_1}{|{Q}^*|+\alpha_2}\\
    \frac{\sum_{v\in{Q}^*} w(v){+}w(u){+}\alpha_1}{\sum_{v\in{Q}^*} w(v) + \alpha_1} &< \frac{|{Q}^*|+1+\alpha_2}{|{Q}^*|+\alpha_2}\\
    1+\frac{w(u)}{\sum_{v\in{Q}^*} w(v) + \alpha_1} &< 1+\frac{1}{|{Q}^*|+\alpha_2}\\
    w(u) &< \frac{\sum_{v\in{Q}^*} w(v) + \alpha_1}{|{Q}^*|+\alpha_2} \\
    w(u) &< \frac{K|{Q}^*| + K\alpha_2}{|{Q}^*|+\alpha_2} = K
\end{align}
Here, the last inequality is due to $\alpha_1 < K\alpha_2$ and $w(v) \leq K$ for every $v$. But this inequality contradicts our assumption that $w(u)=K$.
\end{proof}

\subsection{Proof of Theorem~\ref{thm:lowerbound}}
\label{Apndx:proof:lowerbound}

\theoremlowerbound*

\begin{proof}
Consider any constituent $c_0\in C(T^*)\backslash P$. We construct $T'$ by removing $c_0$ from $T^*$, i.e., 
$C(T^*) = C(T') \cup \{c_0\}$. Based on Eqn.~\eqref{eq:SimpliedAvg}, we have
\begin{align}
\frac{\sum_{c\in C(T^*)} {\mathpzc h}(c)}{|C(T^*)|+2n-1} &> 
\frac{\sum_{c\in C(T')} {\mathpzc h}(c)}{|C(T')|+2n-1}\\
\frac{\sum_{c\in C(T^*)} {\mathpzc h}(c)}{\sum_{c\in C(T')} {\mathpzc h}(c)} &> 
\frac{|C(T^*)|+2n-1}{|C(T')|+2n-1}\\
1+\frac{{\mathpzc h}(c_0)}{\sum_{c\in C(T')} {\mathpzc h}(c)} &>
1+\frac{1}{|C(T')|+2n-1}\\
{\mathpzc h}(c_0) &>
\frac{\sum_{c\in C(T')} {\mathpzc h}(c)}{|C(T')|+2n-1}
\label{eq:tprimedependentlowerbound}
\end{align}

Since the right-hand side is non-negative, we must have $h(c_0)> 0$. In other words, zero-hit constituents do not appear in $C(T^*)\backslash P$. Thus, $m$-many constituents in $C(T^*)\backslash P$ have a total of hit count greater than or equal to $\sum_{i=1}^{m} \lambda_i^{(+)}$, because $\lambda_i^{(+)}$ is the $i$th smallest positive hit count. 

On the other hand, we have $c_0\notin P$, implying that $P \subseteq C(T')$. Since a constituent $c\in C(T')$ is either in $P$ or not, we can lower-bound the total hit count of $T'$ by 
\begin{align}
    \sum_{c\in C(T')} {\mathpzc h}(c) \geq \sum_{c'\in P} {\mathpzc h}(c')+ \sum_{i=1}^{|C(T')\backslash P|} \lambda^{+}_i
    \label{eq:lowerusinglambda}
\end{align}
Putting~\eqref{eq:lowerusinglambda} into~\eqref{eq:tprimedependentlowerbound}, we have
\begin{align}
{\mathpzc h}(c_0) &>
\frac{
\sum_{i=1}^{|C(T')|-|P|} \lambda^{+}_i
+
\sum_{c'\in P} {\mathpzc h}(c')
}{|C(T')|+2n-1}
\label{eq:tprimedependentlowerboundusinglambda}
\end{align}
Moreover, $P \subseteq C(T')$ also implies
\begin{align}
    |P| \leq |C(T')| = |C(T^*)|-1 \leq 2n-2
\end{align}
which derives~\eqref{eq:tprimedependentlowerboundusinglambda} to a $T'$-independent lower bound for $h(c_0)$ if we consider all the values $|C(T')|$ can take, given by
\begin{align}
h(c_0) >
    \min_{|P| \leq j \leq 2n-2} \frac{\sum_{i=1}^{j-|P|} \lambda^{+}_i + \sum_{c'\in P} {\mathpzc h}(c')}{j+2n-1}
\end{align}
concluding the proof.
\end{proof}
The lower bound can be applied for pruning the search. We may immediately prune zero-hit constituents by setting $P=\emptyset$, and more importantly, it yields a much tighter lower bound together with Theorem~\ref{thm:upperbound}.

\subsection{Proof of Theorem~\ref{thm:upperbound}}
\label{Apndx:proof:upperbound}

\theoremupperbound*

\begin{proof}[]
[Part (a)] A hit count of $K$ indicates that the constituent appears in all the individuals. Therefore, $c$ is compatible with every constituent in every individual.
On the other hand, Theorem~\ref{thm:lowerbound} shows that every constituent in the average tree has a positive hit count (appears in at least one individual) and thus is compatible with $c$.

[Part (b)]
We assume by way of contradiction that a constituent $c_0$ with a hit count of $K$ does not appear in the average tree $T^*$. As shown in (a), $c_0$ is compatible with every constituent in $T^*$; hence, there exists a constituency tree $\hat{T}$ such that $C(\hat{T}) = C(T^*) \cup \{c_0\}$. Based on Eqn.~\eqref{eq:SimpliedAvg}, we have
\begin{align}
\frac{\sum_{c\in C(\hat{T})} {\mathpzc h}(c)}{|C(\hat{T})|+2n-1} &< 
\frac{\sum_{c\in C({T^*})} {\mathpzc h}(c)}{|C(T^*)|+2n-1}\\
\frac{\sum_{c\in C(\hat{T})} {\mathpzc h}(c)}{\sum_{c\in C({T^*})} {\mathpzc h}(c)} &<
\frac{|C(\hat{T})|+2n-1}{|C(T^*)|+2n-1}\\
1+\frac{{\mathpzc h}(c_0)}{\sum_{c\in C({T^*})} {\mathpzc h}(c)} &<
1+\frac{1}{|C(T^*)|+2n-1}\\
{\mathpzc h}(c_0) &<
\frac{\sum_{c\in C({T^*})} {\mathpzc h}(c)}{|C(T^*)|+2n-1}\\
{\mathpzc h}(c_0) &<
\frac{K|C(T^*)|}{|C(T^*)|+2n-1} \leq K
\end{align}
Here, the last inequality is due to $\mathpzc h(c) \leq K$ for every $c$. But this contradicts our assumption that $\mathpzc h(c_0)=K$.
\end{proof}

\section{Our Meet-in-the-Middle Algorithm}
\label{apndx:meetinthemiddle}

The meet-in-the-middle technique~\citep{horowitz1974computing} has been used to tackle the max clique problem. This method halves the exponent of the time complexity of exhaustive search, cutting it down from $\mathcal{O}(2^{|V|})$ to $\mathcal{O}(2^\frac{|V|}{2}|V|^2)$ given a graph with $|V|$ vertices. However, the exact same algorithm is not sufficient for our needs, because Problem~\ref{problem:NMWC} is a generalization of the max clique problem and requires additional consideration. In this section, we develop an extended version of the algorithm that solves Problem~\ref{problem:NMWC} within the same time complexity.

\problemNMWC*

To find the solution to Problem~\ref{problem:NMWC}, we utilize the meet-in-the-middle technique, which splits the vertex set $V$ into two parts $V_1$ and $V_2$, whose sizes are as equal as possible. 

For every $V'_1 \subseteq V_1$ and every $0 \le j \le |V'_1|$, we define 
\begin{align}
\operatorname{best}(V'_1, j) = 
\operatorname*{argmax}\limits_{\substack{V_1'':\ V_1'' \subseteq V'_1, |V''_1|=j,\\\text{$V''_1$ is a clique}}}\quad \sum_{v\in V''_1} w(v)
\end{align}
which is the best $j$-vertex clique in $V_1'$, and can be computed recursively.

For initialization, we have $\operatorname{best}(V'_1, 0)=\emptyset$ for every $V'_1\subseteq V_1$ and $\operatorname{best}(\{u\}, 1) = \{u\}$ for every $u \in V_1$.

The recursion is to compute the $\operatorname{best}$ set for any multi-vertex $V'_1$ assuming we have the $\operatorname{best}$ set for every subset of $V_1$, whose size is smaller than $V'_1$. To achieve this, we pick any $u \in V'_1$, and the $\operatorname{best}(V'_1, j)$ may or may not include $u$, which is discussed by cases. 

If $u \not\in \operatorname{best}(V'_1, j)$,  we have
\begin{align}\label{eq:case1}
\operatorname{best}(V'_1, j)=\operatorname{best}(V'_1\backslash\{u\}, j)
\end{align}

If otherwise $u \in \operatorname{best}(V'_1, j)$, then the other $j-1$ vertices must be the neighbors of $u$ in $V'_1$, because $\operatorname{best}(\cdot, \cdot)$ must be a clique. 
Denoting by $A_{V'_1}^{(u)}$ all the vertices in $V'_1$ connected to $u$, we have
\begin{align}\label{eq:case2}\operatorname{best}(V'_1, j) = \{u\}\cup\operatorname{best}(A_{V'_1}^{(u)}, j-1)
\end{align}

Combining the two cases, we have $\operatorname{best}(V'_1, j)$ be either Eqn.~\eqref{eq:case1} or Eqn.~\eqref{eq:case2}, depending on which yields a higher weight. 

For the $V_2$ part, we would like to evaluate every clique $V'_2\subseteq V_2$ and its corresponding best match in $V_1$ to obtain $Q^*\subseteq V=V_1\cup V_2$ that maximizes our scoring function $f(Q;\alpha_1, \alpha_2)$.

We enumerate all the subsets of $V_2$. For every clique $V'_2 \subseteq V_2$, we denote by $A_{V_1}^{(V'_2)}$ the vertices in $V_1$ that are connected to every vertex in $V'_2$. We compute
\begin{align}
\operatorname{val}(V'_2, j) = \sum_{v\ \in\ V'_2\; \cup\; \operatorname{best}\!\big(A_{V_1}^{(V'_2)}, j\big)} w(v)\quad
\end{align}
being the weight of the best clique, which in $V_1$ has $j$ vertices and in $V_2$ involves $V'_2$ only.

Since the $V=V_1\cup V_2$, we can find the vertices of $Q^*$ in $V_1$ and $V_2$ separately:
\begin{align}
Q^*_2,\; j^* =& \operatorname*{argmax}_{V'_2,\; j} \frac{\operatorname{val}(V'_2, j) + \alpha_1}{|V'_2|+j+\alpha_2}\\
Q^*_1 =& \operatorname{best}\!\big(A_{V_1}^{(Q^*_2)}, j^*\big)
\end{align}
The final answer is
$Q^* = Q^*_1 \cup Q^*_2$.

We analyze the time complexity of our algorithm. For the $V_1$ part, our algorithm enumerates $\mathcal O(2^{\frac{|V|}2})$-many subsets $V_1'$ and $\mathcal O(|V|)$-many $j$ values; in each iteration, $\mathcal O(|V|)$-many vertices are visited for $A^{(u)}_{V'_1}$. For the $V_2$ part, we enumerate $\mathcal O(2^\frac{|V|}{2})$-many subsets of $V_2$, each verified for being a clique in $\mathcal O(|V|^2)$ time; after that, $A_{V_1}^{(V_2')}$ is retrieved with complexity $\mathcal O(|V_1|\cdot |V_2'|)=\mathcal O(|V|^2)$, which can be absorbed in the $\mathcal O$-notation. Finally, retrieving $Q^*$ happens in $\mathcal O(2^\frac{|V|}{2}|V|)$ time, and the overall time complexity of our algorithm is $\mathcal O(2^\frac{|V|}{2}|V|^2)$.

\begin{figure}[t]
\begin{center}
\includegraphics[width=\linewidth]{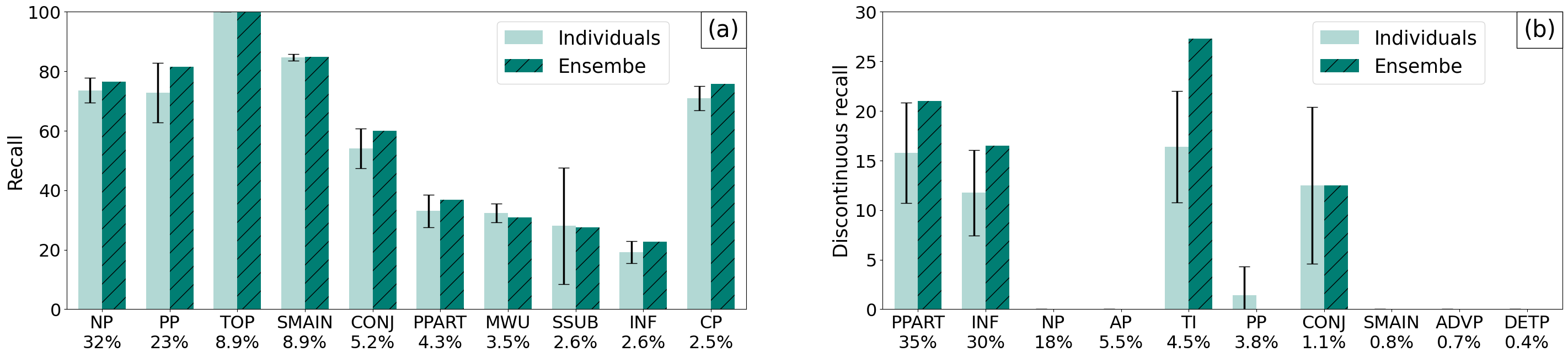}
\end{center}\vspace{-10pt}
\caption{Recall scores of 10 most frequent constituent types on LASSY: (a) among all constituents, and (b) among discontinuous constituents. The percentages under the types indicate their occurrences in the dataset. Error bars are the standard deviation for five individuals.}
\label{fig:per_label_analysis}
\end{figure}

\section{Performance by Constituency Types}
\label{apndx:per_label_analysis}

We would like to analyze our ensemble method's performance from a linguistic point of view. We provide a performance breakdown by constituency types, e.g., noun phrases (NP) and verb phrases (VP). Note that our unsupervised models offer unlabeled constituency structures, and thus we can only compute recall scores for different constituency types. 

In Figures~\ref{fig:per_label_analysis}a and \ref{fig:per_label_analysis}b, we report the recall scores for all constituents and discontinuous constituents, respectively. We can see for most of the types, the ensemble can retain the best individual's performance in that type, suggesting that the ensemble is able to leverage the diverse strengths of different individuals.

\section{Case Studies}
\label{appndx:case}

In Figure~\ref{fig:case1}, we present a case study on the LASSY dataset to show how the ensemble can take advantage of different individuals. In the figure, a groundtruth constituent is annotated by its type (e.g., \texttt{CONJ} representing conjunction). However, our unsupervised parsing is untyped, and we use~``\texttt{o}'' to denote a constituent. A skipped line  ``\texttt{-|-}'' indicates discontinuity.

Consider the last three words ``\textit{moeten worden opgesteld}'' (must be established).\footnote{We provide English interpretations by ChatGPT-4 with the prompt: ``\texttt{Translate the following Dutch sentence. Give word-by-word literal translation.''}} We see the ensemble output follows the majority votes  (three out of five, namely Individuals~$2$,~$4$, and~$5$) and detects the correct structure of this phrase. Moreover, the ensemble is able to detect the discontinuous constituent ``\textit{in sommige gevallen ... niet toereikend zijn}'' ({in some cases ... not sufficient be}), suggested by Individuals~$4$ and~$5$. On the other hand, we also notice that every individual produces incorrect discontinuous constituents, but our ensemble is able to smooth out such noise and achieve high $F_1$ scores.

\begin{figure*}[t]
\hspace{20pt}
\includegraphics[width=1.7\linewidth, height=\textheight]{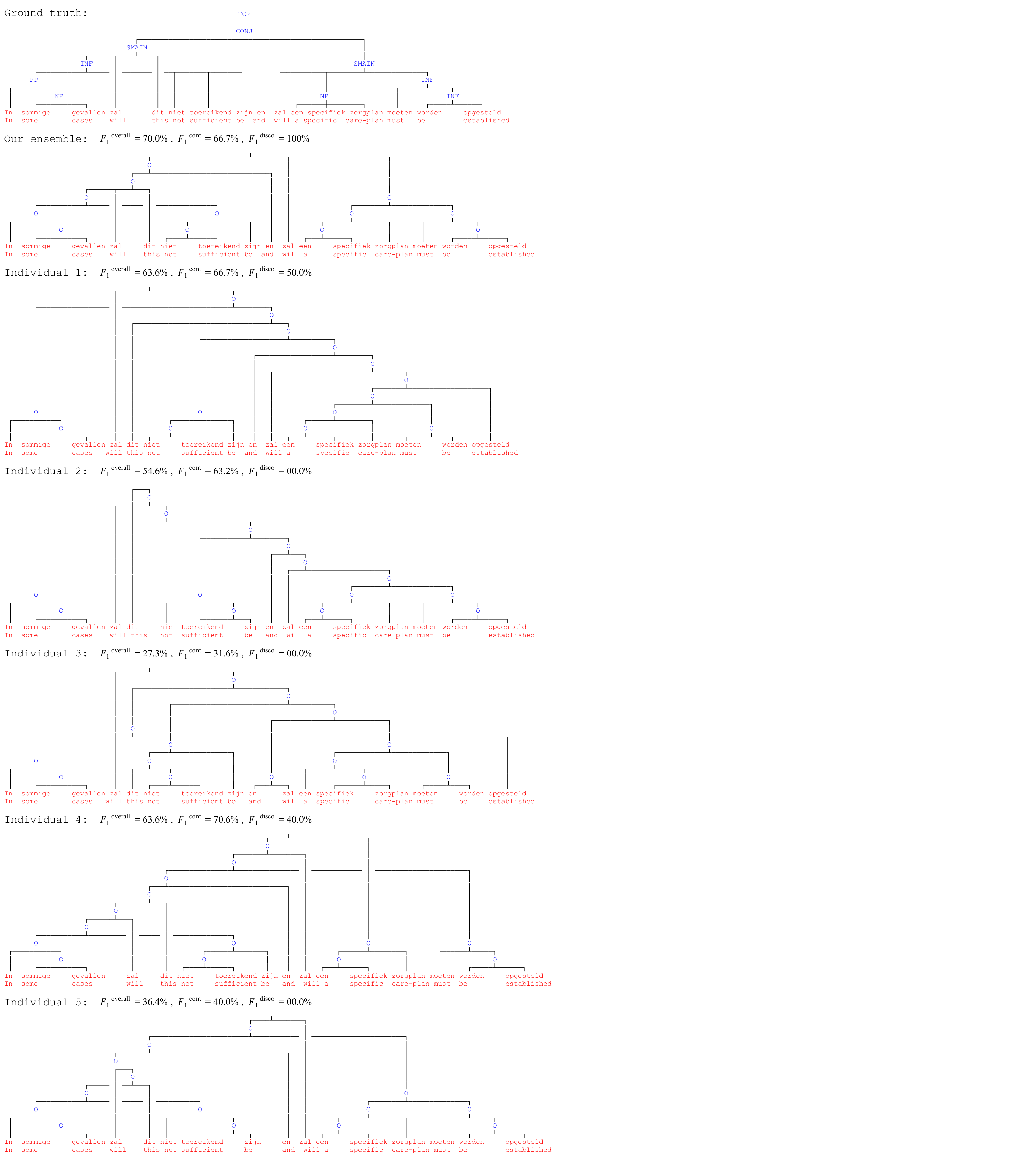}
\vspace{-25pt}
\caption{Case studies with an example in Dutch from the LASSY dataset.}
\label{fig:case1}
\end{figure*}

\end{document}